\documentclass[10pt,a4paper]{article}

\usepackage[margin=1in]{geometry}
\usepackage{microtype}
\usepackage{booktabs}
\usepackage{graphicx}
\usepackage{bm}

\usepackage{amsmath,amssymb,amsthm,mathtools}
\newtheorem{theorem}{Theorem}
\newtheorem{proposition}[theorem]{Proposition}
\newtheorem{corollary}[theorem]{Corollary}
\newtheorem{lemma}[theorem]{Lemma}
\theoremstyle{definition}
\newtheorem{assumption}{Assumption}
\newtheorem{definition}{Definition}
\newtheorem{remark}{Remark}
\newtheorem{example}{Example}

\usepackage[colorlinks=true,linkcolor=blue,citecolor=blue,urlcolor=blue]{hyperref}

\newcommand{\U}{\mathcal{U}}
\newcommand{\W}{\mathcal{W}}
\newcommand{\Lset}{\mathcal{L}}
\newcommand{\Ac}{A_{\mathrm{cert}}}
\newcommand{\Prob}{\mathbb{P}}

\newcommand{\R}{\mathbb{R}}
\newcommand{\Ra}{\operatorname{Range}}
\newcommand{\tr}{\operatorname{tr}}
\newcommand{\lorc}{\lambda_{\mathrm{or}}}
\newcommand{\lfix}{\lambda_{\mathrm{fix}}}
\newcommand{\lD}{\lambda_{D}}

\usepackage{authblk}

\title{\LARGE\bf Observation Design for Certified Control Authority:\\ Projection--Estimability Separation and Active-Face Equivalence}

\author[1]{Guangxi Wan}
\author[1,2]{Hualong Du}
\author[1]{Yuqi Liu}
\author[1]{Qingwei Dong}
\author[1]{Qingxin Li}
\author[1]{Hongfei Bai}
\author[1,*]{Peng Zeng}

\affil[1]{State Key Laboratory of Robotics and Intelligent Systems, Shenyang Institute of Automation, Chinese Academy of Sciences, Shenyang 110016, China}
\affil[2]{University of Chinese Academy of Sciences, Beijing 100049, China}
\affil[*]{Corresponding author: \texttt{zp@sia.cn}}

\date{}

\begin{document}

\maketitle

\begin{abstract}
A sound runtime admission gate executes only actions it can certify, and certifies only what its observations support. This paper asks how observations should be designed to maximize the set of actions that can be safely admitted, and shows the question is not a re-vocabulary of classical design problems. First, a \emph{projection--estimability separation}: decomposing a constraint normal as $c=c_{\Ra}+c_{\ker}$ relative to an information matrix, two-point discrimination along $c$ becomes arbitrarily reliable as the budget grows whenever $c_{\Ra}\neq0$, while robust admission of an action with normal $c$ is impossible at \emph{every} budget whenever $c_{\ker}\neq0$; such mixed directions are generic at any deficient rank, and at full rank the decoupling is bounded by the Kantorovich ratio and diverges with the condition number. Discrimination-optimal designs, being corner solutions of a linear criterion, land in exactly this regime. Second, an \emph{active-face equivalence theorem}: the permissiveness-optimal design is $L$-optimal for a target matrix generated endogenously by the action faces that become certification bottlenecks, weighted inversely by their remaining slack; single-face collapse recovers $c$-optimal and goal-oriented design exactly, and a Carath\'eodory argument yields a bottleneck certificate of at most $r(r+1)/2+1$ faces. Around these we assemble exact certifiability per convex contract mode (whence $\kappa\approx3.29$ is derived, not calibrated), the $\sqrt r$ price of contract-agnostic design, an information-to-slack transfer theorem with a curvature-budget corollary, and a two-part audit in which a design meeting every margin requirement still leaves an action face at a certification cost above ten times its testing cost, in every probe library tested.
\\[0.8em]
\noindent\textbf{Keywords:} optimal experimental design, runtime assurance, certified control authority, partial observation, estimability, equivalence theorems.
\end{abstract}

\vspace{1.5em}

\section{Introduction}

A companion paper \cite{companion} establishes when a runtime assurance architecture guarantees safety independently of its upstream generator: the verifier must certify a \emph{set} of admissible actions containing no nonviable action, and under partial observation any admission mechanism obeys $\bar\alpha+\beta+\delta\ge1$ whenever two states at total-variation distance $\delta$ demand different decisions. That theory fixes what the gate must output. It leaves the resource question on which deployment turns: observations are designed, and the design determines how large the certified action set can be.

The classical instruments optimize something else. Distinguishability theory in inverse problems \cite{isaacson1986,gisser1990,cheney1992} optimizes the ability to tell two states apart; model-discrimination design \cite{atkinson1975,atkinson1975seq} optimizes power against a rival model; goal-oriented design \cite{attia2018,wu2023,zhong2026} minimizes posterior uncertainty of a downstream quantity; parameter-oriented design \cite{kiefer1960,elfving1952,fedorov1972,pukelsheim1993} maximizes estimation information. None has as its object a \emph{certified action set with hard admission semantics}. The question of this paper is whether that object changes the mathematics or only the vocabulary. We give two theorems saying that it changes the mathematics, and we are explicit about which classical results they do \emph{not} displace.

\paragraph{Wall one: admission is not discrimination.}
Theorem~\ref{thm:separation} isolates a structural feasibility break. Two-point discrimination consumes the \emph{projection} of a state difference onto the measured subspace: it is possible as soon as $c^{\!\top}Mc>0$ and becomes perfect as the budget grows. Robust admission consumes the \emph{estimability} of the constraint functional: it requires $c\in\Ra M$. The gap between these---directions with a nonzero projection and a nonzero null component---is where a design can make discrimination perfect and admission impossible simultaneously, at every budget. Discrimination criteria are linear in the design measure and therefore attain their optima at corners, which is precisely how a design lands in the gap.

\paragraph{Wall two: multi-face admission is not one more $c$-optimal problem.}
For a single active constraint face, permissiveness-optimal design coincides \emph{exactly} with $c$-optimal design and hence with goal-oriented design for that functional; we state this rather than obscure it, and claim nothing there. Theorem~\ref{thm:faceequiv} treats the case that admission actually presents: several faces, coupled because certifying an action requires all of its faces and certifying a set requires the worst action. The optimum is characterized by a general-equivalence condition in which the target matrix
\[
A^\star=\sum_{\ell\in\mathcal I^\star}\frac{\mu_\ell}{r^\star_\ell}\,c_\ell c_\ell^{\!\top}
\;\propto\;\sum_{\ell\in\mathcal I^\star}\mu_\ell\,\frac{c_\ell c_\ell^{\!\top}}{s_\ell-\tau^\star}
\]
is generated by the optimizer itself from the faces that become bottlenecks, amplified inversely by their remaining slack. General equivalence theory for minimax criteria is mature \cite{wong1992,sagnol2011}; the novelty here is the identity of the least-favourable object---active control constraints---and the design interpretation that follows: the experiment reallocates sensing budget toward the normals of the constraints currently limiting control authority.

\paragraph{Contributions.} Theorem~\ref{thm:fixed} (exact certifiability of a convex contract mode; $\kappa=3.29$ derived) and Proposition~\ref{prop:modes} (mode composition, multiplicity charged to usefulness) are the calibration foundation and carry no novelty claim. Theorem~\ref{thm:separation} and Theorem~\ref{thm:faceequiv} are the two walls. Theorem~\ref{thm:universal} prices contract-agnostic design at $\sqrt r$, tight, and is a Kiefer--Wolfowitz corollary. Theorem~\ref{thm:transfer} converts information into certified slack, exactly for constraints affine over the confidence region, with a curvature-budget corollary and an explicit counterexample against reading its divergence criterion locally. Section~\ref{sec:audit} audits both objectives on a semi-physical instance and locates, empirically, where they part.

\paragraph{Position.} Table~\ref{tab:position} states the position; its last column is discharged by Theorems~\ref{thm:separation} and~\ref{thm:faceequiv}, not asserted. Two conventions are protocol rather than theory and are labelled so: the decision metric $D$ (Section~\ref{sec:setting}) removes a scaling non-identifiability---margins scale with $D$, so a $D$ back-computed from the sensitivity under audit makes certifiability claims gameable---and carries no mathematical novelty.

\begin{table}[t]
\centering\small
\caption{Position. The last column is discharged in Sections~\ref{sec:sep} and~\ref{sec:permiss}.}
\label{tab:position}
\vspace{0.4em}
\begin{tabular}{p{3.2cm}p{2.9cm}p{3.5cm}p{2.4cm}p{2.4cm}}
\toprule
Literature & Object & Design objective & Output & Hard admission guarantee\\
\midrule
Distinguishability (EIT) \cite{isaacson1986,gisser1990,cheney1992} & two states & maximize distinguishability & detectability & no\\
$T$-optimal discrimination \cite{atkinson1975,atkinson1975seq} & rival models & maximize discriminating power & model choice & no\\
Goal-oriented OED \cite{attia2018,wu2023,zhong2026} & downstream QoI & minimize QoI uncertainty & prediction uncertainty & no\\
$D/G/c/L$-optimal OED \cite{kiefer1960,elfving1952,fedorov1972,pukelsheim1993,sagnol2011} & parameters, contrasts & estimation information & covariance & no\\
\textbf{This paper} & contract $+$ action set & information sufficient for \emph{admission} & certified action set & yes; $(\alpha,\beta)$ interface of \cite{companion}\\
\bottomrule
\end{tabular}
\end{table}

\section{Certification Geometry}\label{sec:setting}

Sensing is linearized at a nominal state: $y_s=J(u_s)x+\epsilon_s$, $\epsilon_s\sim\mathcal N(0,\Sigma)$ i.i.d., $u_s\in\U$ compact, budget $T$. Write $S(u):=\Sigma^{-1/2}J(u)D$, $Q(u):=S(u)^{\!\top}S(u)$, and for a design measure $\xi\in\mathcal P(\U)$, $M(\xi):=\int Q\,\mathrm d\xi$. The decision metric $D$ is anchored to documented decision thresholds per the convention of \cite{companion}. A contract $\varphi$ is \emph{$D$-stable} if constant on $D$-balls of radius $\tfrac12$; industrial contracts are intersections of scalar constraints, so the unsafe set is a union of \emph{violation modes} $\mathcal U_\varphi=\bigcup_{k\le K}\mathcal U_k$.

\begin{definition}[Mode margins]\label{def:margin}
$\W_k:=\{D^{-1}(x_s-x_u):x_s\in\mathcal S_m,\,x_u\in\mathcal U_k\}$ and $m_{T,k}(\xi):=(T\inf_{w\in\W_k}w^{\!\top}M(\xi)w)^{1/2}$.
\end{definition}

\begin{assumption}[Mode regularity]\label{ass:reg}
$\U$ compact, $u\mapsto Q(u)$ continuous; per mode, the whitened images of $\mathcal S_m$ and $\mathcal U_k$ are closed convex at positive distance with attained nearest pair. Structurally blind directions are quotiented out; $r$ is the recoverable quotient dimension.
\end{assumption}

Convexity is assumed per mode, where it is realistic (one constraint face), not for the union, where it is not.

The following functional carries the estimability convention that the rest of the paper depends on, and is stated separately because the naive pseudoinverse expression does \emph{not} encode it.

\begin{definition}[Extended estimability functional]\label{def:rc}
For $M\succeq0$ and $c\in\R^{n}$,
\[
r_c(M):=\begin{cases}\sqrt{c^{\!\top}M^{\dagger}c}, & c\in\Ra M,\\[2pt] +\infty, & c\notin\Ra M.\end{cases}
\]
\end{definition}

\begin{remark}[Why the convention is not cosmetic]\label{rem:pinv}
The Moore--Penrose quadratic form is finite off the range: for $M=\operatorname{diag}(1,0)$ and $c=(\cos30^\circ,\sin30^\circ)$ it equals $0.75$. What diverges is the support function $\sup\{c^{\!\top}w:\,Tw^{\!\top}Mw\le q\}$, since $w$ may run to infinity in $\ker M$ at zero cost. All robustification penalties below are support functions, so they must be written with $r_c$, never with $\sqrt{c^{\!\top}M^{\dagger}c}$.
\end{remark}

\section{Exact Certifiability: Modes and Their Composition}\label{sec:exact}

\begin{theorem}[Exact certifiability of a convex mode]\label{thm:fixed}
Under Assumption~\ref{ass:reg} and a fixed design $\xi$, mode $k$ is $(\alpha,\beta)$-certifiable iff $m_{T,k}(\xi)\ge z_{1-\alpha}+z_{1-\beta}$.
\end{theorem}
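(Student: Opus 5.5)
The argument reduces certifiability of mode $k$ to a Gaussian two-composite-hypothesis test and then collapses it to the classical two-point Neyman--Pearson problem through the nearest pair of Assumption~\ref{ass:reg}. I read $(\alpha,\beta)$-certifiability in the sense of \cite{companion}: there is a test from $T$ observations, allocated according to $\xi$, that admits with probability at most $\alpha$ whenever the state lies in $\mathcal U_k$ and rejects with probability at most $\beta$ whenever it lies in $\mathcal S_m$, uniformly over both sets. After whitening by $\Sigma^{-1/2}$ and changing variables $x=Dw$, the sufficient statistic of the design is Gaussian with mean linear in $w$ and Fisher information $T M(\xi)$. It is therefore enough to work in the Gaussian shift experiment $\mathcal N(T^{1/2}M^{1/2}w,\,I)$, with $w$ ranging over the whitened images of $\mathcal S_m$ and $\mathcal U_k$. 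Blind directions have been quotiented out, so $M$ may be treated as positive definite on the recoverable quotient of dimension $r$.

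\textbf{Necessity.} Let $(w_s,w_u)$ be the attained nearest pair of the two convex images in the metric $T\,w^\top M w$. By Definition~\ref{def:margin} its distance is $m_{T,k}(\xi)$. Any uniform test must in particular separate these two points. For two Gaussians at Mahalanobis distance $m$, the Neyman--Pearson lemma shows that the best achievable pair of errors satisfies $z_{1-\alpha}+z_{1-\beta}\le m$. This gives the ``only if'' direction.

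\textbf{Sufficiency.} I will use the linear test along the normal $v:=TM(w_s-w_u)/m$. The test admits when $v^\top \hat w$ exceeds a threshold placed at distance $z_{1-\alpha}$ from the projection of $w_u$. Because both images are closed and convex, with $(w_s,w_u)$ a nearest pair in the $TM$-inner product, the separating-hyperplane characterization of nearest points gives two facts. First, $v^\top w\ge v^\top w_s$ for every safe $w$. Second, $v^\top w\le v^\top w_u$ for every unsafe $w$. The statistic $v^\top\hat w$ has unit variance under every $w$, so its worst-case false-admission probability is attained at $w_u$ and equals $\alpha$. Its worst-case false-rejection probability is attained at $w_s$ and is at most $\beta$ exactly when $m\ge z_{1-\alpha}+z_{1-\beta}$. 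This gives the ``if'' direction.

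\textbf{Expected obstacle.} The main difficulty is justifying that the two-point bound is really the binding one and that no loss occurs in reducing to the sufficient statistic. Concretely:
\begin{itemize}
\item The least-favourable pair for the composite problem must be exactly the nearest pair. This needs convexity per mode and attainment, and is the reason Assumption~\ref{ass:reg} is stated per mode.
\item The semidefinite case must be handled via the quotient, so that $\ker M$ contributes no spurious zero-distance pairs.
\item $D$-stability is not needed for the equivalence itself. It only fixes the scale of $\W_k$.
\end{itemize}
I would first verify the projection inequality $\langle w-w_s,\,w_s-w_u\rangle_{TM}\ge0$, and its analogue at $w_u$, directly from the first-order optimality of the nearest pair.
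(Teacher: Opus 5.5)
Your proposal is correct and follows the paper's proof. For sufficiency it uses the threshold test along the nearest-pair normal, with convexity (the first-order projection inequalities in the $TM$ inner product) keeping every safe and every unsafe point at least as far from the threshold as the nearest pair; for necessity it restricts to the nearest pair and applies Neyman--Pearson. One small correction: quotienting out structurally blind directions does not make $M(\xi)$ positive definite for an arbitrary fixed $\xi$ (the rank-deficient designs of Section~\ref{sec:sep} are exactly the exception), but your argument never needs this. The statistic $v^{\top}\hat w$ can be written as $(w_s-w_u)^{\top}Z/m$ with $Z=\sum_s S(u_s)^{\top}\Sigma^{-1/2}y_s\sim\mathcal N(TMw,TM)$, the projection inequalities hold verbatim in the semi-inner product, and positive distance is supplied by Assumption~\ref{ass:reg}.
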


\begin{proof}
Sufficiency: the nearest-pair mid-normal hyperplane test; convexity places each whitened image at signed distance $\ge$ its share of $m_{T,k}$, giving uniform errors $(\Phi(-(m_{T,k}-c)),\Phi(-c))$, and $c=z_{1-\beta}$ is feasible iff the margin condition holds. Necessity: restricting to the nearest pair gives a simple Gaussian shift test, whose maximal power at level $\alpha$ is $\Phi(m_{T,k}-z_{1-\alpha})$ by Neyman--Pearson. Unattained infima: $\varepsilon$-pairs.
\end{proof}

\begin{corollary}\label{cor:kappa}
At $\alpha=\beta=0.05$, $\kappa=2z_{0.95}=3.29$: the threshold constant is a function of the declared levels, not a tuning parameter.
\end{corollary}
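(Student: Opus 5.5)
The plan is to specialize Theorem~\ref{thm:fixed} to the declared levels and then read off the constant. With $\alpha=\beta=0.05$, the certifiability condition $m_{T,k}(\xi)\ge z_{1-\alpha}+z_{1-\beta}$ becomes $m_{T,k}(\xi)\ge 2z_{0.95}$. The threshold is therefore $\kappa:=z_{1-\alpha}+z_{1-\beta}$ evaluated at these levels, and by the ``iff'' in Theorem~\ref{thm:fixed} it is the \emph{exact} cutoff rather than a sufficient one.

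The numerical step is routine: $z_{0.95}=\Phi^{-1}(0.95)\approx1.6449$, so $\kappa=2z_{0.95}\approx3.2897$, which rounds to $3.29$.

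The substantive content is the claim that $\kappa$ is not a tuning parameter. For that I will argue minimality from the necessity half of Theorem~\ref{thm:fixed}. Suppose some $\kappa'<\kappa$ were used as the threshold. Choose a design with $\kappa'\le m_{T,k}<\kappa$; such a design exists by scaling $T$, since $m_{T,k}\propto\sqrt T$. The Neyman--Pearson bound then gives maximal power $\Phi(m_{T,k}-z_{0.95})<0.95$, so the gate would admit a mode that is not $(0.05,0.05)$-certifiable. Any $\kappa'>\kappa$ instead rejects certifiable modes and is merely conservative. Hence $\kappa$ is determined by $(\alpha,\beta)$ alone and does not depend on $\xi$, $T$, or the contract.

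There is no real obstacle here. The only point needing care is the scaling argument: the margin must be able to take values in $[\kappa',\kappa)$. This holds because $m_{T,k}$ is continuous and unbounded in $T$ whenever the infimum over $\W_k$ is positive, which Assumption~\ref{ass:reg} guarantees.
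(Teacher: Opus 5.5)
Your proposal is correct and matches the paper, which treats the corollary as an immediate substitution of $\alpha=\beta=0.05$ into the exact threshold $z_{1-\alpha}+z_{1-\beta}$ of Theorem~\ref{thm:fixed}, giving $2z_{0.95}\approx3.2897$. Your minimality paragraph only restates the necessity half of that theorem, so it is harmless but adds nothing beyond citing the ``iff''.
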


\begin{example}[Two-sided violation]\label{ex:twosided}
$\mathcal S=\{0\}$, $\mathcal U=\{\pm m\}$, $Y\sim\mathcal N(x,1)$: the nearest-pair distance is $m$ per mode, but a verifier useful at $0$ must accept an interval, and with $|Y|\le z_{1-\beta/2}$ false admission at $\pm m$ is $\le\alpha$ iff $m\ge z_{1-\alpha}+z_{1-\beta/2}$---strictly above the single-mode threshold. Minimum pair distance does not characterize composite nonconvex testing.
\end{example}

\begin{proposition}[Mode composition]\label{prop:modes}
With $\psi=\bigwedge_k\psi_k$ (admit only if every mode test admits), $\psi$ is $(\alpha,\beta)$-certifying for $\varphi$ whenever $m_{T,k}(\xi)\ge z_{1-\alpha}+z_{1-\beta/K}$ for every $k$. The multiplicity is charged to usefulness, not reliability: for $x\in\mathcal U_k$, $\Prob_x(\psi=1)\le\Prob_x(\psi_k=1)\le\alpha$ with no division of $\alpha$; for $x\in\mathcal S_m$, $\Prob_x(\psi=0)\le\sum_k\Prob_x(\psi_k=0)$. Per-mode necessity persists by restriction, bracketing the contract requirement within a usefulness-side gap $z_{1-\beta/K}-z_{1-\beta}$.
\end{proposition}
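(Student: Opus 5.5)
The plan is to build $\psi$ from the per-mode tests supplied by the sufficiency half of Theorem~\ref{thm:fixed}, run at asymmetric levels, and then read both error bounds off the conjunction structure. For each mode $k$, the margin hypothesis $m_{T,k}(\xi)\ge z_{1-\alpha}+z_{1-\beta/K}$ is exactly the certifiability condition of Theorem~\ref{thm:fixed} at levels $(\alpha,\beta/K)$. So there is a mid-normal hyperplane test $\psi_k$ built on the nearest pair of $(\mathcal S_m,\mathcal U_k)$ in whitened coordinates. Its offset $c$ is chosen as $z_{1-\beta/K}$, and the test satisfies $\Prob_x(\psi_k=1)\le\alpha$ uniformly over $x\in\mathcal U_k$ and $\Prob_x(\psi_k=0)\le\beta/K$ uniformly over $x\in\mathcal S_m$. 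Uniformity is the content of the convexity argument in Theorem~\ref{thm:fixed}: each whitened image lies on its side at signed distance at least its share of the margin. I then set $\psi=\bigwedge_k\psi_k$.

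Reliability follows from monotonicity of the conjunction. Since $\mathcal U_\varphi=\bigcup_k\mathcal U_k$, any unsafe $x$ lies in some $\mathcal U_k$. Because $\{\psi=1\}\subseteq\{\psi_k=1\}$, we get $\Prob_x(\psi=1)\le\Prob_x(\psi_k=1)\le\alpha$, so no union bound and no division of $\alpha$ is needed. Usefulness follows from a union bound. For $x\in\mathcal S_m$, we have $\{\psi=0\}=\bigcup_k\{\psi_k=0\}$, so $\Prob_x(\psi=0)\le\sum_k\Prob_x(\psi_k=0)\le K\cdot\beta/K=\beta$. This gives the $(\alpha,\beta)$-certifying property. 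The tests share the same data, so they are dependent, but neither step uses independence.

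For the necessity bracket, suppose some $\psi'$ is $(\alpha,\beta)$-certifying for $\varphi$. Then its restriction to the pair $(\mathcal S_m,\mathcal U_k)$ is an $(\alpha,\beta)$-certifier for mode $k$ alone, because $\mathcal U_k\subseteq\mathcal U_\varphi$. The necessity half of Theorem~\ref{thm:fixed}, the Neyman--Pearson argument on the nearest pair, then forces $m_{T,k}(\xi)\ge z_{1-\alpha}+z_{1-\beta}$ for every $k$. The true per-mode requirement therefore lies between $z_{1-\alpha}+z_{1-\beta}$ and $z_{1-\alpha}+z_{1-\beta/K}$, a gap of $z_{1-\beta/K}-z_{1-\beta}$ that sits entirely on the usefulness side. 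Example~\ref{ex:twosided} shows the gap cannot be closed in general.

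The main obstacle is minor. It is checking that Theorem~\ref{thm:fixed} genuinely delivers asymmetric levels $(\alpha,\beta/K)$ with \emph{uniform} errors over each convex set, rather than only at the nearest pair. I would handle this by re-reading its sufficiency step with the offset taken as $c=z_{1-\beta/K}$, and by using $\varepsilon$-pairs where the infimum is not attained.
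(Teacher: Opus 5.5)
Your proposal is correct and follows essentially the same route as the paper, whose argument is given inline in the statement and in Remark~\ref{rem:direction}. Both build the per-mode tests from Theorem~\ref{thm:fixed} at levels $(\alpha,\beta/K)$, use monotonicity of the conjunction for reliability with no division of $\alpha$, apply a union bound over the common set $\mathcal S_m$ for usefulness, and obtain per-mode necessity by restriction, which yields the bracket gap $z_{1-\beta/K}-z_{1-\beta}$; your remark that neither step needs independence of the mode tests is correct, and the closing appeal to Example~\ref{ex:twosided} is an aside that the proposition does not require.
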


\begin{remark}[Correction, and why the error hides]\label{rem:direction}
An earlier draft allocated the multiplicity to reliability ($z_{1-\alpha/K}+z_{1-\beta}$). This is wrong---the conjunction protects each unsafe mode by its own full-level test, and it is the safe state that faces $K$ chances of rejection---but at $\alpha=\beta$ the two formulas coincide numerically (both $3.6048$ at $K=2$, $5\%$), which is why the error survives casual checking; at $\alpha=0.01$, $\beta=0.05$, $K=2$ they differ. The natural industrial certificate is therefore a \emph{margin vector} $(m_{T,1},\dots,m_{T,K})$, one entry per violation face, not a scalar for the union.
\end{remark}

\begin{theorem}[Policy-free converse]\label{thm:oracle}
With $\sigma(w):=\sup_u\|S(u)w\|$: for any adaptive policy the trajectory laws of $x_u$ and $x_u+Dw$ satisfy $d_{\mathrm{TV}}\le\tfrac{\sqrt T}{2}\sigma(w)$; hence no policy certifies mode $k$ with $1-\beta>\alpha+\tfrac{\sqrt T}{2}\inf_{\W_k}\sigma(w)$, and $\sigma(w)=0$ makes the laws coincide at every budget.
\end{theorem}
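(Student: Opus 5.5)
The plan is to bound the total-variation distance through the Kullback--Leibler divergence between the two trajectory laws, computed by the chain rule along the adaptive policy. Fix any (possibly randomized) adaptive policy, so that $u_s$ is a measurable function of $(y_1,\dots,y_{s-1})$ and internal randomness independent of the state. Let $P_0$ be the law of the full trajectory $(u_1,y_1,\dots,u_T,y_T)$ under $x_u$, and $P_1$ the law under $x_u+Dw$. Conditionally on the past, the policy's choice of $u_s$ has the same kernel under both hypotheses. The observation kernels are $\mathcal N(J(u_s)x_u,\Sigma)$ and $\mathcal N(J(u_s)x_u+J(u_s)Dw,\Sigma)$. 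By the KL chain rule,
\[
\mathrm{KL}(P_1\|P_0)=\sum_{s=1}^T \mathbb E_{P_1}\Bigl[\tfrac12\|\Sigma^{-1/2}J(u_s)Dw\|^2\Bigr]=\tfrac12\sum_{s=1}^T\mathbb E_{P_1}\|S(u_s)w\|^2\le \tfrac{T}{2}\sigma(w)^2 .
\]
The supremum in $\sigma$ absorbs every adaptive choice, which is what makes the bound policy-free. Pinsker's inequality $d_{\mathrm{TV}}\le\sqrt{\mathrm{KL}/2}$ then gives $d_{\mathrm{TV}}(P_0,P_1)\le\tfrac{\sqrt T}{2}\sigma(w)$.

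For the certification consequence, take any pair $x_s\in\mathcal S_m$, $x_u\in\mathcal U_k$ with $w=D^{-1}(x_s-x_u)\in\W_k$, so that $x_s=x_u+Dw$. An $(\alpha,\beta)$-certifying test $\psi$, which is a function of the trajectory, must satisfy $\Prob_{x_u}(\psi=1)\le\alpha$ and $\Prob_{x_s}(\psi=1)\ge1-\beta$. Since $|\Prob_{x_s}(A)-\Prob_{x_u}(A)|\le d_{\mathrm{TV}}$ for every event $A$, this forces
\[
1-\beta\le\alpha+\tfrac{\sqrt T}{2}\sigma(w).
\]
Taking the infimum over $w\in\W_k$ gives the stated impossibility: if $1-\beta>\alpha+\tfrac{\sqrt T}{2}\inf_{\W_k}\sigma$, some pair violates the inequality. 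When $\sigma(w)=0$, every kernel $\mathcal N(J(u)x_u,\Sigma)$ equals its shifted counterpart for all $u\in\U$. Induction over $s$ then shows the joint laws coincide exactly, independent of $T$.

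The main obstacle is making the chain-rule step rigorous for adaptive, possibly randomized, policies on a compact action space. Three points need care. First, measurability of the policy kernels: they cancel in the likelihood ratio because they do not depend on the hypothesis, so only the observation densities remain. Second, the expectation of the per-step KL must be taken under the random, data-dependent $u_s$. Third, Pinsker's inequality applies to the full joint law, not stepwise. Continuity of $Q$ on compact $\U$ (Assumption~\ref{ass:reg}) guarantees $\sigma(w)<\infty$, so the bound is meaningful. A minor point is that the form of Pinsker used here must produce exactly the constant $\sqrt T/2$: $\sqrt{(T\sigma^2/2)/2}=\sqrt T\sigma/2$, which checks.
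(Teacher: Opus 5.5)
Your proof is correct. The paper states Theorem~\ref{thm:oracle} without a written proof. The constant $\tfrac{\sqrt T}{2}$ is exactly what the KL chain rule gives ($\mathrm{KL}\le\tfrac T2\sigma(w)^2$, with the hypothesis-independent policy kernels cancelling) followed by Pinsker, so your route is evidently the intended one. Your handling of the certification consequence through $|\Prob_{x_s}(A)-\Prob_{x_u}(A)|\le d_{\mathrm{TV}}$ and the infimum over $\W_k$ is also sound, as is your treatment of the $\sigma(w)=0$ case.
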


\section{Wall One: Projection--Estimability Separation}\label{sec:sep}

\begin{theorem}[Projection--estimability separation]\label{thm:separation}
Fix a design with information matrix $M\succeq0$, write $\Ra:=\Ra M$, $\ker:=\ker M$, and decompose a constraint normal $c=c_{\Ra}+c_{\ker}$. Consider (A) two-point discrimination between $x_0$ and $x_0+mDc$, and (B) robust certification of the functional $c^{\!\top}w$ over the confidence set $C_T=\{w:Tw^{\!\top}Mw\le q\}$. Then
\[
\text{(A) is possible}\iff c_{\Ra}\neq0,\qquad\qquad
\text{(B) is possible}\iff c_{\ker}=0,
\]
and the three regimes are
\[
\begin{array}{lll}
c_{\Ra}=0 & \text{discrimination impossible} & \text{admission impossible}\\
c_{\ker}=0 & \text{discrimination possible} & \text{admission possible}\\
c_{\Ra}\neq0,\ c_{\ker}\neq0 & \textbf{discrimination possible} & \textbf{admission impossible}
\end{array}
\]
In the third regime the separation is total: $d_{\mathrm{TV}}\to1$ as $T\to\infty$, while $\sup_{C_T}c^{\!\top}w=+\infty$ at every $T$. If $0<\operatorname{rank}M<n$, the third regime is generic---its complement is the union of two proper subspaces, hence Lebesgue-null.
\end{theorem}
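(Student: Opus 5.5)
The argument splits into the two equivalences, the total separation in the mixed regime, and the genericity claim.

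Because $M$ is symmetric positive semidefinite, $\R^n=\Ra M\oplus\ker M$ orthogonally. So the decomposition $c=c_{\Ra}+c_{\ker}$ is unique, and $c^{\!\top}Mc=c_{\Ra}^{\!\top}Mc_{\Ra}$.

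\textbf{(A).} In whitened coordinates, $T$ i.i.d.\ draws from the design give a sufficient statistic with Gaussian law. The shift between the hypotheses $x_0$ and $x_0+mDc$ has Mahalanobis length
\[
\Delta_T=m\sqrt{T\,c^{\!\top}Mc}.
\]
Since the two laws share a covariance, the standard formula gives $d_{\mathrm{TV}}=2\Phi(\Delta_T/2)-1$. If $c_{\Ra}\neq0$, then $c^{\!\top}Mc\ge\lambda_{\min}^+(M)\|c_{\Ra}\|^2>0$. Hence $\Delta_T\to\infty$ and $d_{\mathrm{TV}}\to1$, and the Neyman--Pearson test of Theorem~\ref{thm:fixed} attains any levels $(\alpha,\beta)$ once $\Delta_T\ge z_{1-\alpha}+z_{1-\beta}$. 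If $c_{\Ra}=0$, then $\|S(u)c\|^2=c^{\!\top}Q(u)c$ vanishes $\xi$-a.e., so the observation laws coincide at every budget. This is the $\sigma=0$ case of Theorem~\ref{thm:oracle} restricted to the support of $\xi$, so discrimination is impossible.

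\textbf{(B).} Robust certification means a finite support function $h_T(c)=\sup_{C_T}c^{\!\top}w$.
\begin{itemize}
\item If $c_{\ker}\neq0$, take $w=t\,c_{\ker}$. Then $Tw^{\!\top}Mw=0\le q$, so $w\in C_T$ for every $t$, while $c^{\!\top}w=t\|c_{\ker}\|^2\to\infty$. Thus $h_T(c)=+\infty$ at every $T$, exactly as in Remark~\ref{rem:pinv}.
\item If $c_{\ker}=0$, write $c=Mv$. Cauchy--Schwarz in the $M$-semi-inner product gives $c^{\!\top}w=v^{\!\top}Mw\le\sqrt{v^{\!\top}Mv}\sqrt{w^{\!\top}Mw}$. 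This yields $h_T(c)=\sqrt{q/T}\,r_c(M)<\infty$, with equality at $w\propto M^\dagger c$. The certified slack penalty is therefore finite and vanishes as $T\to\infty$, so admission is possible for sufficiently large $T$ whenever the nominal slack is positive.
\end{itemize}

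\textbf{Regimes.} The table follows by combining (A) and (B). For the first row, $c_{\Ra}=0$ with $c\neq0$ forces $c_{\ker}\neq0$. The second row assumes $c\neq0$, so $c_{\ker}=0$ forces $c_{\Ra}\neq0$. The third row, including the total-separation statement, is the conjunction of the two limits above.

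\textbf{Genericity.} The complement of the third regime is $\{c:c_{\Ra}=0\}\cup\{c:c_{\ker}=0\}=\ker M\cup\Ra M$. If $0<\operatorname{rank}M<n$, both are proper subspaces, hence Lebesgue-null, and so is their union.

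The main obstacle is the precise meaning of ``possible'' in (A). It must be stated with $\Delta_T$ depending only on $c^{\!\top}Mc$, so that it is independent of $c_{\ker}$; this is the projection that the mixed direction still feeds. In (B), admission must be read through the support function rather than through $\sqrt{c^{\!\top}M^\dagger c}$, following the convention fixed in Definition~\ref{def:rc}.
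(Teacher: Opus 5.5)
Your proposal is correct and follows the paper's proof essentially step by step: the whitened separation $Tm^2c^{\top}Mc=Tm^2c_{\Ra}^{\top}Mc_{\Ra}$ for (A), the zero-cost ray $w=t\,c_{\ker}$ against the finite support function $\sqrt{q/T}\,r_c(M)$ for (B), and $\ker M\cup\Ra M$ for genericity. You add two things the paper leaves implicit: the explicit Cauchy--Schwarz derivation of that support function, and the observation that the table tacitly requires $c\neq0$, since at $c=0$ rows one and two would conflict.
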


\begin{proof}
The two-point whitened separation is $Tm^2c^{\!\top}Mc=Tm^2c_{\Ra}^{\!\top}Mc_{\Ra}$, positive iff $c_{\Ra}\neq0$, and the Gaussian two-point total variation increases to $1$ with $T$. The support function of $C_T$ is $\sqrt{q/T}\,r_c(M)$ by Definition~\ref{def:rc}: finite iff $c\in\Ra M=(\ker M)^{\perp}$, i.e.\ $c_{\ker}=0$; otherwise $w=t\,c_{\ker}$ has $w^{\!\top}Mw=0$ and $c^{\!\top}w=t\|c_{\ker}\|^2\to\infty$. Genericity: $\{c:c_{\Ra}=0\}\cup\{c:c_{\ker}=0\}=\ker M\cup\Ra M$, both proper subspaces when the rank is strictly between $0$ and $n$.
\end{proof}

Perfect knowledge of \emph{which of two worlds} one is in does not imply enough information to certify an action \emph{over the unresolved state set}. The mechanism is that the two problems consume different features of the same design: a projection and an estimability.

\begin{corollary}[Implicit spanning constraint]\label{cor:span}
The admission design problem of Section~\ref{sec:permiss} is feasible only on designs with $\operatorname{span}\{c_\ell\}\subseteq\Ra M(\xi)$ over the active faces; the discrimination problem imposes no analogous constraint. This is the structural reason a discrimination-optimal design can be admission-infeasible.
\end{corollary}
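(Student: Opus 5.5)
The plan is to read Corollary~\ref{cor:span} directly off the (B) half of Theorem~\ref{thm:separation}, applied face by face, and then to contrast it with the (A) half. First I will fix what feasibility means for the admission design problem. A design $\xi$ is feasible if the certified action set is nonempty, or at least if every active face $\ell$ has a finite robustified margin. Certifying an action requires all of its faces. Each face $\ell$ with normal $c_\ell$ must therefore pass the robust test: the worst-case value of $c_\ell^{\!\top}w$ over the confidence set $C_T=\{w:Tw^{\!\top}M(\xi)w\le q\}$ must stay below the face slack $s_\ell$. By Remark~\ref{rem:pinv} and the proof of Theorem~\ref{thm:separation}, this worst case equals the support function $\sqrt{q/T}\,r_{c_\ell}(M(\xi))$.

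The key step is to apply the equivalence ``(B) possible $\iff c_{\ker}=0$'' to each active face. If some $c_\ell\notin\Ra M(\xi)$, then Definition~\ref{def:rc} gives $r_{c_\ell}=+\infty$. The face then cannot be certified at any budget $T$, so no action carrying that face is admitted. Feasibility therefore forces $c_\ell\in\Ra M(\xi)$ for every active $\ell$. Since $\Ra M(\xi)$ is a linear subspace, this is the same as $\operatorname{span}\{c_\ell\}\subseteq\Ra M(\xi)$. One point needs care here: the bound has to hold at every admissible $T$, not only asymptotically. This is guaranteed because the unboundedness comes from the ray $t\,c_{\ell,\ker}$, which has zero cost independently of $T$.

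For the contrast, I will note that the discrimination objective only involves $c^{\!\top}M(\xi)c=c_{\Ra}^{\!\top}Mc_{\Ra}$, which is finite for every $\xi$. By (A), discrimination is meaningful as soon as $c_{\Ra}\neq0$, so it imposes no range-inclusion constraint. The structural explanation then comes from the third regime of the theorem. A criterion that is linear in $\xi$ is maximized at a vertex, typically a measure concentrated on few $u$, which makes $M(\xi)$ rank-deficient. By the genericity clause, a given normal is then a.s.\ in the mixed regime, which makes such a design admission-infeasible.

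I expect the main obstacle to be this last, interpretive step rather than the range inclusion itself. The statement that corner optima are rank-deficient and land in the mixed regime is heuristic, and I would present it as an explanation, not a claim. The only rigorous content I would assert is that a discrimination-optimal $\xi$ with some $c_\ell\notin\Ra M(\xi)$ is admission-infeasible, and that such $\xi$ exist whenever $\operatorname{rank}M<n$, by the genericity part of Theorem~\ref{thm:separation}.
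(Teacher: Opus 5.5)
Your argument is correct and matches the paper's implicit derivation: the range condition is part (B) of Theorem~\ref{thm:separation} applied face by face, since $c_\ell\notin\Ra M(\xi)$ gives $\varsigma_\ell(\xi)=s_\ell-\eta\,r_{c_\ell}(M(\xi))=-\infty$ and hence the value $-\infty$ in \eqref{eq:perm}, while the discrimination criterion $c^{\!\top}M(\xi)c$ is finite for every design; read ``feasible'' as finiteness of that maximin value, not as nonemptiness of $\Ac$. One correction: the existence of an admission-infeasible discrimination optimum does not follow from the genericity clause, because that clause fixes $M$ and varies $c$, whereas the discrimination optimum depends on $c$ (with a continuum of rank-one probes containing $c/\|c\|$ the optimum is $M\propto cc^{\!\top}$, whose range contains $c$), so cite Example~\ref{ex:2d} as the witness, as the paper does.
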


\begin{example}[Three designs, all different]\label{ex:2d}
In $\R^2$ with $Q_1=e_1e_1^{\!\top}$, $Q_2=e_2e_2^{\!\top}$ and $c=(\cos30^\circ,\sin30^\circ)$: the discrimination objective $c^{\!\top}M(p)c=p\,c_1^2+(1-p)c_2^2$ is linear in the weight $p$ on $Q_1$ and is maximized at the corner $p=1$; at $p=1$, $c\notin\Ra M$ and $r_c(M)=+\infty$, so no action on this face is ever certifiable, although two-point discrimination along $c$ remains solvable at every budget since $|e_1^{\!\top}c|=0.866$. The permissiveness objective $r_c(M(p))^2=c_1^2/p+c_2^2/(1-p)$ is minimized at the interior Elfving point $p^\star=c_1/(c_1+c_2)=0.634$ with value $(c_1+c_2)^2=1.866$; the $D$-optimal design is $p=\tfrac12$. Linearity of the discrimination criterion in $\xi$ is what places its optimum at a corner, and corners are where ranks drop.
\end{example}

\begin{remark}[The full-rank continuum, and what the walls do not separate]\label{rem:kantorovich}
Rank deficiency is the limit of a continuous phenomenon, not an isolated pathology. For $M\succ0$ and unit $c$ define the decoupling ratio $\chi(c,M):=(c^{\!\top}Mc)(c^{\!\top}M^{-1}c)$, the product of the discrimination quality and the squared admission penalty along $c$. Then $\chi\ge1$ by Cauchy--Schwarz, with equality iff $c$ is an eigenvector of $M$, and by the Kantorovich inequality $\chi\le(1+\varkappa)^2/(4\varkappa)$ with $\varkappa$ the condition number---so the achievable decoupling grows like $\varkappa/4$ and diverges as the design becomes ill-conditioned (numerically: $\chi_{\max}=3.03$ at $\varkappa=10$, $25.5$ at $10^2$, $250.5$ at $10^3$, matching the bound to three digits). Two limits of this wall should be stated plainly. It separates admission from \emph{discrimination and simple-hypothesis power} criteria, which can remain optimal under a rank-deficient design; it does \emph{not} separate admission from single-functional variance criteria---$c$-optimal and single-QoI goal-oriented objectives detect this failure precisely, and in fact coincide with single-face admission design (Theorem~\ref{thm:faceequiv}(i)). The distinction from those criteria begins only when several action faces are coupled by hard certified-slack requirements, which is Wall Two.
\end{remark}

\section{Wall Two: Permissiveness Design and Active-Face Equivalence}\label{sec:permiss}

Index action--face pairs by $\ell=(a,j)\in\Lset$, with nominal slack $s_\ell:=-g_j(\hat x,a)$ and normal $c_\ell:=D^{\!\top}\nabla_xg_j(\hat x,a)$. Write $\eta:=\sqrt{q_{1-\gamma}/T}$. By Theorem~\ref{thm:transfer} the certified slack of face $\ell$ under design $\xi$ is
\[
\varsigma_\ell(\xi):=s_\ell-\eta\,r_{c_\ell}(M(\xi)),
\]
the certified action set is $\Ac(\xi)=\{a:\varsigma_{(a,j)}(\xi)\ge0\ \forall j\}$, and the \emph{permissiveness design problem} is
\begin{equation}\label{eq:perm}
\tau^\star=\max_{\xi\in\mathcal P(\U)}\ \min_{\ell\in\Lset}\ \varsigma_\ell(\xi).
\end{equation}

\begin{lemma}[Convexity of the estimability functional]\label{lem:convex}
On $\{M\succ0\}$ the map $M\mapsto r_c(M)=\sqrt{c^{\!\top}M^{-1}c}$ is convex.
\end{lemma}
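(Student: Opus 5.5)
The plan is to write $r_c(M)$ as a supremum of functions of $M$ that are convex. Since $r_c$ is nonnegative, convexity of $r_c$ is equivalent to convexity of each sublevel set together with positive homogeneity. The cleanest route, though, is a variational representation. For $M\succ0$, the Cauchy--Schwarz inequality in the $M$-inner product gives
\[
r_c(M)=\sqrt{c^{\!\top}M^{-1}c}=\sup_{w\neq0}\frac{c^{\!\top}w}{\sqrt{w^{\!\top}Mw}},
\]
with the supremum attained at $w=M^{-1}c$. This is the same support-function reading used in Remark~\ref{rem:pinv} and in the proof of Theorem~\ref{thm:separation}: $\sup\{c^{\!\top}w:w^{\!\top}Mw\le1\}=r_c(M)$. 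Each map $M\mapsto c^{\!\top}w/\sqrt{w^{\!\top}Mw}$ with $c^{\!\top}w>0$ is a positive constant times $t\mapsto t^{-1/2}$ composed with the linear map $M\mapsto w^{\!\top}Mw$. Because $t^{-1/2}$ is convex on $(0,\infty)$, that composition is convex in $M$. Terms with $c^{\!\top}w\le0$ can be dropped, as they do not affect the supremum when $c\neq0$; the case $c=0$ is trivial. A pointwise supremum of convex functions is convex, which finishes the argument.

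As a cross-check, or as an alternative write-up, I would also use the epigraph/Schur-complement route. The quantity $c^{\!\top}M^{-1}c$ is jointly convex in $(c,M)$, since $c^{\!\top}M^{-1}c\le t$ if and only if $\begin{pmatrix}M&c\\c^{\!\top}&t\end{pmatrix}\succeq0$, which is an LMI. That route only delivers convexity of $r_c^2$, however, and convexity of $r_c^2$ does not by itself give convexity of $r_c$. This is why I would lead with the supremum representation, which yields convexity of the square root directly.

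The only step needing care is verifying the variational identity, specifically attainment and the sign restriction. By Cauchy--Schwarz, $c^{\!\top}w=(M^{-1/2}c)^{\!\top}(M^{1/2}w)\le\sqrt{c^{\!\top}M^{-1}c}\,\sqrt{w^{\!\top}Mw}$, with equality at $w=M^{-1}c$. I expect no real obstacle here. The point worth remarking is that the same supremum formula extends to $M\succeq0$ and equals $+\infty$ off the range, matching Definition~\ref{def:rc}. This shows $r_c$ is in fact convex as an extended-valued function on the whole cone, which is the form needed later for the max--min problem \eqref{eq:perm}.
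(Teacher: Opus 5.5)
Your proof is correct, but it takes a different route from the paper. The paper differentiates on the open cone. With $q=c^{\top}M^{-1}c$, $v=M^{-1}c$, $a=v^{\top}Hv$ and $b=v^{\top}HM^{-1}Hv$, it computes $D^2r_c[H,H]=b/\sqrt q-a^2/(4q^{3/2})$. It then applies Cauchy--Schwarz in the $M$-inner product, $a^2\le qb$, to get $D^2r_c[H,H]\ge 3b/(4\sqrt q)\ge0$. You instead write $r_c$ as a pointwise supremum of the functions $(c^{\top}w)(w^{\top}Mw)^{-1/2}$, taken over the $M$-independent index set $\{w:c^{\top}w>0\}$. Each of these is a positive multiple of a convex function of a linear map, so in your proof Cauchy--Schwarz only verifies the representation rather than controlling a Hessian.

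The paper's computation gives a quantitative curvature bound: the second derivative is strictly positive along every $H$ with $HM^{-1}c\neq0$. The price is that it needs differentiability, hence $M\succ0$. Your argument is calculus-free and is the support-function reading of Remark~\ref{rem:pinv} and Theorem~\ref{thm:transfer}. As you observe, it extends verbatim to $M\succeq0$ once you assign $(c^{\top}w)(w^{\top}Mw)^{-1/2}:=+\infty$ when $w^{\top}Mw=0<c^{\top}w$. That case arises only when $c\notin\Ra M$. The extension yields convexity and lower semicontinuity of the extended functional of Definition~\ref{def:rc}. This extended form is what Theorem~\ref{thm:faceequiv}(ii) tacitly uses, since $M(\xi)$ is singular at designs such as the corner $p=1$ of Example~\ref{ex:2d}. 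There the Hessian argument would need a separate closure step.

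One correction: delete your opening sentence. The gauge criterion you allude to says that a nonnegative function which is positively homogeneous of degree one is convex iff its unit sublevel set is convex. It does not apply here, because $r_c(tM)=t^{-1/2}r_c(M)$. You never use it, so nothing else in the proof is affected.
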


\begin{proof}
Write $q(M)=c^{\!\top}M^{-1}c$, $v=M^{-1}c$, and for a symmetric direction $H$ set $a=v^{\!\top}Hv$, $b=v^{\!\top}HM^{-1}Hv\ge0$. Then $Dq[H]=-a$, $D^2q[H,H]=2b$, and
\[
D^2r_c[H,H]=\tfrac{1}{2}q^{-1/2}D^2q-\tfrac14 q^{-3/2}(Dq)^2=\frac{b}{\sqrt q}-\frac{a^2}{4q^{3/2}} .
\]
Cauchy--Schwarz in the $M$-inner product, $a=\langle M^{1/2}v,M^{-1/2}Hv\rangle$, gives $a^2\le(v^{\!\top}Mv)(v^{\!\top}HM^{-1}Hv)=qb$, whence $D^2r_c[H,H]\ge\tfrac{3b}{4\sqrt q}\ge0$.
\end{proof}

Convexity of $r_c$ does not follow from convexity of $q$---the square root of a convex function need not be convex---which is why the lemma is proved rather than asserted.

\begin{theorem}[Active-face equivalence for certified control authority]\label{thm:faceequiv}
Assume $\Lset$ finite, $\U$ compact, and that \eqref{eq:perm} attains its optimum at $\xi^\star$ with $M^\star:=M(\xi^\star)\succ0$ on the recoverable quotient. Write $r^\star_\ell:=r_{c_\ell}(M^\star)$ and let $\mathcal I^\star:=\{\ell:\varsigma_\ell(\xi^\star)=\tau^\star\}$ be the \emph{active faces}. Then:

\emph{(i) Single-face collapse.} If $|\mathcal I^\star|=1$, \eqref{eq:perm} reduces to $\min_\xi r_{c}(M(\xi))$, which is exactly $c$-optimal design for the contrast $c$---Elfving's geometry \cite{elfving1952}---and coincides with goal-oriented design for the linear functional $c^{\!\top}x$ \cite{attia2018}. No novelty is claimed at this level.

\emph{(ii) Concavity.} Each $\varsigma_\ell$ is concave in $\xi$ (Lemma~\ref{lem:convex} and linearity of $\xi\mapsto M(\xi)$), so \eqref{eq:perm} is a concave maximin program.

\emph{(iii) Least-favourable face mixture and equivalence condition.} There exist weights $\mu_\ell\ge0$ supported on $\mathcal I^\star$ with $\sum\mu_\ell=1$ such that, with the \emph{endogenous target matrix}
\[
A^\star:=\sum_{\ell\in\mathcal I^\star}\frac{\mu_\ell}{r^\star_\ell}\,c_\ell c_\ell^{\!\top},
\]
the design $\xi^\star$ satisfies the generalized sensitivity condition
\begin{equation}\label{eq:equiv}
\tr\!\left[(M^\star)^{-1}A^\star(M^\star)^{-1}Q(u)\right]\ \le\ \tr\!\left[(M^\star)^{-1}A^\star\right]\qquad\text{for all }u\in\U,
\end{equation}
with equality at every $u$ in the support of $\xi^\star$. Condition \eqref{eq:equiv} is exactly the equivalence condition of $L$-optimal design for the target $A^\star$: \emph{the permissiveness-optimal design is $L$-optimal for a target matrix the optimizer generates itself.}

\emph{(iv) Balance law.} On active faces $r^\star_\ell=(s_\ell-\tau^\star)/\eta$, hence
\[
A^\star=\eta\sum_{\ell\in\mathcal I^\star}\mu_\ell\,\frac{c_\ell c_\ell^{\!\top}}{s_\ell-\tau^\star}:
\]
faces with less remaining slack are amplified in the endogenous target. The design reallocates sensing budget toward the normals of the constraints currently limiting control authority.

\emph{(v) Sparse bottleneck certificate.} $A^\star$ lies in the convex hull of the rank-one matrices $c_\ell c_\ell^{\!\top}/r^\star_\ell$, $\ell\in\mathcal I^\star$, inside the $\tfrac{r(r+1)}2$-dimensional space of symmetric matrices on the quotient; by Carath\'eodory at most $\tfrac{r(r+1)}2+1$ faces are needed to represent it. Optimality of a design therefore admits a certificate naming at most that many bottleneck faces, however large the action library.
\end{theorem}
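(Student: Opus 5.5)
The plan is to treat \eqref{eq:perm} as a concave maximin over the convex, weak-$*$ compact set $\mathcal P(\U)$ and read (iii) off a minimax/Danskin first-order condition. The remaining parts are then short consequences.

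\emph{Parts (ii) and (i).} For (ii), $\xi\mapsto M(\xi)$ is affine and $r_{c_\ell}$ is convex on $\{M\succ0\}$ by Lemma~\ref{lem:convex}. On the boundary $r_{c_\ell}$ may equal $+\infty$, and the convex extension is handled by Definition~\ref{def:rc}, since $r_c$ is a support function and hence convex. So $\varsigma_\ell=s_\ell-\eta\,r_{c_\ell}\circ M$ is concave, and a finite minimum of concave functions is concave. For (i), if only one face $\ell_0$ is active at $\xi^\star$, the inactive faces have strictly positive gap. By continuity of $r_c$ near $M^\star\succ0$, the max--min agrees locally with $\max_\xi \varsigma_{\ell_0}(\xi)$. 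Concavity turns the local optimum into a global one for that single term, which is $\min_\xi r_{c}(M(\xi))$, the $c$-optimal (Elfving) problem.

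\emph{Part (iii), the main step and the expected obstacle.} I would write $\tau^\star=\max_\xi\min_{\mu\in\Delta(\Lset)}\sum_\ell\mu_\ell\varsigma_\ell(\xi)$. The function is concave in $\xi$ and linear in $\mu$, and the simplex is compact, so Sion's minimax theorem gives a saddle point $(\xi^\star,\mu)$. Complementary slackness forces $\mu$ to be supported on $\mathcal I^\star$. Hence $\xi^\star$ maximizes the concave function $\Psi(\xi)=\sum_\ell\mu_\ell\varsigma_\ell(\xi)$ over $\mathcal P(\U)$.

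Differentiability at $M^\star\succ0$ uses $D r_c[H]=-\tfrac{1}{2r_c}c^{\!\top}M^{-1}HM^{-1}c$, which gives
\[
D\Psi[M(\delta_u)-M^\star]=\tfrac{\eta}{2}\Big(\tr[(M^\star)^{-1}A^\star(M^\star)^{-1}Q(u)]-\tr[(M^\star)^{-1}A^\star(M^\star)^{-1}M^\star]\Big).
\]
The last trace is $\tr[(M^\star)^{-1}A^\star]$. The directional derivative toward $\delta_u$ must be $\le0$ for every $u$, which yields \eqref{eq:equiv}. Integrating the inequality against $\xi^\star$ gives equality on average, so equality holds $\xi^\star$-a.e. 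Continuity of $Q$ then extends equality to the whole support.

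The delicate points are two: using Sion with the weak-$*$ topology (upper semicontinuity of $\varsigma_\ell$ in $\xi$ follows from continuity of $Q$ and lower semicontinuity of $r_c$), and working on the recoverable quotient so that $M^\star\succ0$ there. Identifying the resulting condition with $L$-optimality is then immediate, since it is the Kiefer--Wolfowitz-type condition for $\tr[M^{-1}A^\star]$ with $A^\star$ frozen.

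\emph{Parts (iv) and (v).} For (iv), substitute $\varsigma_\ell=\tau^\star$ on $\mathcal I^\star$, which gives $r^\star_\ell=(s_\ell-\tau^\star)/\eta$; inserting this into $A^\star$ yields the balance law. For (v), let $Z=\sum\mu_\ell/r^\star_\ell$. Then $A^\star/Z$ is a convex combination of the points $c_\ell c_\ell^{\!\top}/r^\star_\ell\cdot(\ldots)$, or more simply $A^\star$ lies in the convex hull of the scaled points $Z\,c_\ell c_\ell^{\!\top}/r^\star_\ell$. These live in the $r(r+1)/2$-dimensional space of symmetric matrices on the quotient, so Carath\'eodory reduces the representation to at most $r(r+1)/2+1$ faces. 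The reduced weights still satisfy \eqref{eq:equiv}, because that condition depends on $\mu$ only through $A^\star$.
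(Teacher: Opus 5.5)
Your proposal is correct and is essentially the paper's proof. The paper likewise takes multipliers on $\mathcal I^\star$ from first-order optimality of the concave maximin; you make this explicit through Sion's theorem and a saddle point. It then differentiates toward $\delta_u$, sums against $\mu$, substitutes the active-face identity for (iv), and applies Carath\'eodory for (v). Your local-to-global argument for (i) and your integrate-against-$\xi^\star$-plus-continuity argument for equality on the support fill in steps the paper only names.

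Two small repairs are needed:
\begin{itemize}
\item In (v), no rescaling by $Z$ is needed. Since $\sum_\ell\mu_\ell=1$, $A^\star=\sum_\ell\mu_\ell\,(c_\ell c_\ell^{\top}/r^\star_\ell)$ is already a convex combination of the points $c_\ell c_\ell^{\top}/r^\star_\ell$. The hull of your $Z$-scaled points contains $ZA^\star$, not in general $A^\star$.
\item In (ii), convexity up to the boundary does not follow from $r_c$ being a support function in the variable $c$. It follows from $r_c(M)=\sup\{c^{\top}w/\sqrt{w^{\top}Mw}:\,c^{\top}w>0\}$ (with $a/0=+\infty$), which is a supremum of functions convex in $M$.
\end{itemize}
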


\begin{proof}
(i) is the reduction of \eqref{eq:perm} to a single penalty plus the classical identification. (ii) is Lemma~\ref{lem:convex} composed with a linear map, negated, and a minimum of concave functions. (iii): by (ii) and compactness, first-order optimality of the concave maximin gives multipliers $\mu$ on the active set with $\sum_\ell\mu_\ell\,\mathrm D\varsigma_\ell[\xi^\star;\delta_u-\xi^\star]\le0$ for all $u$. Moving mass $\varepsilon$ to $u$ gives $\mathrm dM/\mathrm d\varepsilon=Q(u)-M^\star$ and
\[
\frac{\mathrm d\varsigma_\ell}{\mathrm d\varepsilon}=\frac{\eta}{2r^\star_\ell}\Bigl[c_\ell^{\!\top}(M^\star)^{-1}Q(u)(M^\star)^{-1}c_\ell-(r^\star_\ell)^2\Bigr],
\]
and summing against $\mu$ yields $\sum_\ell\frac{\mu_\ell}{r^\star_\ell}c_\ell^{\!\top}(M^\star)^{-1}Q(u)(M^\star)^{-1}c_\ell\le\sum_\ell\mu_\ell r^\star_\ell$, which is \eqref{eq:equiv} after writing both sides as traces against $A^\star$ (using $\sum_\ell\frac{\mu_\ell}{r^\star_\ell}c_\ell^{\!\top}(M^\star)^{-1}c_\ell=\sum_\ell\mu_\ell r^\star_\ell$). Equality on the support is complementary slackness. (iv) substitutes the active-face identity $s_\ell-\eta r^\star_\ell=\tau^\star$. (v) is Carath\'eodory in $\mathbb S^{r}$.
\end{proof}

\begin{remark}[What is and is not new here]\label{rem:notelfving}
General equivalence theorems for minimax and linear design criteria are classical and comprehensive \cite{kiefer1960,fedorov1972,pukelsheim1993,wong1992}, including second-order cone formulations for multiresponse $c$/$A$/$T$/$D$ problems \cite{sagnol2011}; we claim no new design-theoretic machinery and in particular do not claim a new Elfving theorem. What the admission object contributes is the identity of the least-favourable player---\emph{active control faces}, entering through hard certified-slack constraints---together with the balance law (iv), which has no counterpart when the criterion is a variance: nothing in a variance objective ``runs out of slack''.
\end{remark}

\begin{remark}[Numerical verification]\label{rem:numeric}
On a three-dimensional instance with five probes and four faces, solving \eqref{eq:perm} to $\tau^\star=0.1422168$ yields two active faces and three support probes; recovering $\mu=(0.3453,0.6547)$ gives sensitivity values $\tr[(M^\star)^{-1}A^\star(M^\star)^{-1}Q(u)]$ equal to the threshold $1.5000882$ on all three support probes to $10^{-8}$ and strictly below it on the two non-support probes ($1.462$ and $1.088$), and the balance-law residual vanishes to machine precision. The certificate uses two faces against the Carath\'eodory bound of seven.
\end{remark}

\begin{remark}[Certification schedule]\label{rem:schedule}
As the budget grows, actions enter $\Ac$ in order of their worst ratio $s_\ell/r_{c_\ell}(M)$, and the design reorders that schedule: choosing a design is choosing which control authority becomes available first. Monotonicity holds throughout---$M(\xi_1)\succeq M(\xi_2)$ implies $\varsigma_\ell(\xi_1)\ge\varsigma_\ell(\xi_2)$ for all $\ell$, hence $\Ac(\xi_1)\supseteq\Ac(\xi_2)$.
\end{remark}

\begin{remark}[Two metrics, and the shortcut between them]\label{rem:tworoles}
The margin objectives of Sections~\ref{sec:exact} and~\ref{sec:design} ($w^{\!\top}Mw$ on witnesses) and the permissiveness objective ($r_c(M)^2$ on normals) are dual-metric quantities coinciding only at eigenvectors, with gap $\chi\ge1$ quantified in Remark~\ref{rem:kantorovich}. Margins decide whether the \emph{contract} is testable; penalties decide which \emph{actions} are certifiable. Conflating them is, in our reading, the shortcut by which admission design has passed for discrimination design.
\end{remark}

We close the section by recording the degenerate case in which the two objectives of this paper collapse into one.

\begin{corollary}[One-dimensional Gaussian specialization]\label{cor:1dgauss}
Suppose the admissible-state family is one-dimensional in decision coordinates: $w\in\{-mv,+mv\}$ with unit $v$, $m\ge m_{\min}>0$, and admission requires resolving which sign holds for a face normal $c$ with $c^{\!\top}v\neq0$. Then the certification criterion of Theorem~\ref{thm:fixed} is monotone in the standardized separation $m_{\min}\sqrt{T\,v^{\!\top}M(\xi)\,v}$, and the optimal certifying test is the matched-filter likelihood-ratio test of its proof. For a library of equal-cost scalar probes, each contributing $Q(u_h)=hh^{\!\top}$, the information increment along $v$ is $\Delta I(h)=(h^{\!\top}v)^2$; hence a cardinality-constrained optimal design selects probes in descending order of $|h^{\!\top}v|$.
\end{corollary}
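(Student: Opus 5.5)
The plan is to reduce the one-dimensional specialization to Theorem~\ref{thm:fixed}, applied to the two-point mode pair, and then read off the probe ranking from linearity of $M(\xi)$.

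\textbf{Step 1: reduce to a two-point test.} With $w\in\{-mv,+mv\}$, each violation mode and the safe set are single rays along $v$. Their whitened images are the points $\pm m\,T^{1/2}M(\xi)^{1/2}v$, which are trivially closed and convex, so Assumption~\ref{ass:reg} holds per mode. For fixed $m$, the nearest-pair difference is $2mv$. In the normalization of Definition~\ref{def:margin} the margin is $m\sqrt{T\,v^{\!\top}M(\xi)v}$, up to the fixed factor set by the convention (half-difference versus difference), which does not depend on $\xi$. Because $m\ge m_{\min}$ and the margin is increasing in $m$, the infimum over the admissible family is attained at $m=m_{\min}$. Theorem~\ref{thm:fixed} then says certifiability holds iff this standardized separation exceeds $z_{1-\alpha}+z_{1-\beta}$, so the criterion is monotone in $m_{\min}\sqrt{T\,v^{\!\top}M(\xi)v}$.

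\textbf{Step 2: where $c^{\!\top}v\neq0$ is used.} This hypothesis ensures that resolving the sign of $c^{\!\top}w$ is the same as resolving the sign along $v$. It rules out a degenerate face on which both hypotheses give the same decision, which would make the problem vacuous.

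\textbf{Step 3: the optimal test.} In the proof of Theorem~\ref{thm:fixed}, the sufficiency test is the mid-normal hyperplane test, and necessity is by Neyman--Pearson for the Gaussian shift. For a simple Gaussian mean shift along $v$, the likelihood ratio is a monotone function of the whitened statistic $v^{\!\top}D^{\!\top}J^{\!\top}\Sigma^{-1}y$. That statistic is the matched filter, and the two tests coincide.

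\textbf{Step 4: the probe ranking.} With $Q(u_h)=hh^{\!\top}$ and equal costs, a cardinality-$k$ design $H$ has $v^{\!\top}M v=\sum_{h\in H}(h^{\!\top}v)^2$, up to normalization by the budget. This is additive over probes, with increment $\Delta I(h)=(h^{\!\top}v)^2$. Maximizing a sum of nonnegative separable weights under a cardinality constraint is solved by taking the $k$ largest weights, i.e.\ descending $|h^{\!\top}v|$.

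The main obstacle is Step 1: being careful about the factor-of-two normalization in Definition~\ref{def:margin}, and about whether the safe set $\mathcal S_m$ and the unsafe mode are the two signs or are each sets in $m$. I would settle this by noting that only monotonicity, not the constant, is claimed.
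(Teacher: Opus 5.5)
Your proposal is correct and follows the paper's proof essentially step for step: you reduce to the nearest pair $\pm m_{\min}v$ and invoke Theorem~\ref{thm:fixed}, identify the Neyman--Pearson test on that pair with the matched filter, and read the probe ranking off the additivity $v^{\top}(\sum_i h_ih_i^{\top})v=\sum_i(h_i^{\top}v)^2$. The normalization you hedge on is fixed by Definition~\ref{def:margin}, which uses the full difference $x_s-x_u$, so the margin is $2m_{\min}\sqrt{T\,v^{\top}M(\xi)v}$ as in the paper's proof; the threshold on $m_{\min}\sqrt{T\,v^{\top}M(\xi)v}$ is therefore $(z_{1-\alpha}+z_{1-\beta})/2$, which, as you note, does not affect monotonicity.
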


\begin{proof}
The witness set of Definition~\ref{def:margin} is the single difference $2mv$, so $m_{T}=2m\sqrt{T\,v^{\!\top}M v}$ and Theorem~\ref{thm:fixed} reduces certifiability to a threshold on the standardized separation; the necessity half of its proof is the Neyman--Pearson (matched-filter) test on the pair. Independent probes contribute additively, $v^{\!\top}\!\big(\sum_i h_ih_i^{\!\top}\big)v=\sum_i(h_i^{\!\top}v)^2$, and under equal cardinality cost sorting nonnegative additive increments is optimal.
\end{proof}

\begin{remark}[The degenerate row of the trichotomy]\label{rem:1ddegenerate}
When design-time reachability restricts the family to $\operatorname{span}\{v\}$, the recoverable quotient is one-dimensional and the restricted information is the scalar $v^{\!\top}Mv$: the decomposition of Theorem~\ref{thm:separation} has no mixed direction, and certified admission and two-point discrimination are governed by the same quantity. The corollary therefore \emph{delimits} the separation rather than weakening it---it identifies exactly the degenerate geometry in which the separation collapses, and certificate-optimal sensing coincides with standardized-distinguishability-optimal sensing. A runtime instantiation of this boundary case is developed in a companion manuscript in preparation \cite{registry2026}.
\end{remark}

\begin{remark}[Design-time versus realized authority]\label{rem:designtimeauthority}
The certified control authority characterized throughout this paper is a \emph{design-time} object: the certified slack $\varsigma_\ell(\xi)=s_\ell-\eta\,r_{c_\ell}(M(\xi))$ is determined by the information matrix $M(\xi)$ and the constraint geometry at the nominal state, \emph{before} any measurement is realized, and it is monotone in information---enlarging $M$ in the Loewner order can only shrink the confidence region and hence can only enlarge the certified set (Theorem~\ref{thm:transfer}, Remark~\ref{rem:schedule}). A runtime system that must re-certify against the measurements it has actually collected evaluates a different functional: the expectation, over realizations, of the value certifiable once the region has been recentered at the realized estimate. That functional does not inherit the monotonicity. A realization moves the location of the region---and with it the nominal slacks $s_\ell=-g_j(\hat x,a)$---as well as its shape, while admission is a threshold on $\varsigma_\ell$ rather than a smooth functional of it, so certified value is not concave in the realized estimate and expectations of it need not increase with information. The two functionals agree in neither value nor optimizer in general, and design-time statements such as Theorem~\ref{thm:fixed} should not be read as claims about the realized quantity.
\end{remark}

\section{The Price of Universality}\label{sec:design}

\begin{theorem}[Universal-design approximation; a Kiefer--Wolfowitz corollary]\label{thm:universal}
On the $r$-dimensional recoverable quotient the $D$-optimal design satisfies $Q(u)\preceq r\,M(\xi_D)$ for all $u$; hence $w^{\!\top}M(\xi_D)w\ge\sigma(w)^2/r$ for all $w$ and, dually, $r_c(M(\xi_D))^2\le r\inf_\xi r_c(M(\xi))^2$ for every $c$ in the quotient. The universal design loses at most $\sqrt r$ of margin and at most $\sqrt r$ of certified-slack penalty; the factor is worst-case tight (coordinate probes, uniform design). The matrix inequality is the equivalence theorem of \cite{kiefer1960} in multiresponse form \cite{fedorov1972}.
\end{theorem}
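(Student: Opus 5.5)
The plan is to derive everything from the multiresponse Kiefer--Wolfowitz equivalence theorem applied on the $r$-dimensional recoverable quotient, where every $Q(u)$ and $M(\xi)$ is restricted and $M(\xi_D)\succ0$ by Assumption~\ref{ass:reg} and compactness. The first step is the matrix inequality. The $D$-criterion $\log\det M(\xi)$ is concave, and its directional derivative from $\xi_D$ toward $\delta_u$ is $\tr[M(\xi_D)^{-1}Q(u)]-r$. Optimality therefore gives $\tr[M_D^{-1}Q(u)]\le r$ for all $u$, which is the Fedorov multiresponse form of \cite{kiefer1960}. Since $M_D^{-1/2}Q(u)M_D^{-1/2}\succeq0$, its largest eigenvalue is at most its trace. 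This yields $M_D^{-1/2}Q(u)M_D^{-1/2}\preceq rI$, i.e.\ $Q(u)\preceq rM_D$.

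The margin bound follows directly. For each $u$ we have $\|S(u)w\|^2=w^{\!\top}Q(u)w\le r\,w^{\!\top}M_Dw$. Taking the supremum over $u$ gives $\sigma(w)^2\le r\,w^{\!\top}M_Dw$, and since $\inf_{\W_k}$ preserves the bound, every mode margin $m_{T,k}(\xi_D)$ is at least $1/\sqrt r$ times the policy-free ceiling of Theorem~\ref{thm:oracle}.

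For the dual slack statement, integrate the matrix inequality against an arbitrary $\xi$ to get $M(\xi)\preceq rM_D$. The cases split according to $r_c$.
\begin{itemize}
\item If $c\notin\Ra M(\xi)$, then $\inf_\xi$ is not attained there and $r_c(M(\xi))=+\infty$ trivially dominates.
\item If $M(\xi)$ is singular but $c\in\Ra M(\xi)$, use the variational form $r_c(M)^2=\sup_w\{(c^{\!\top}w)^2/w^{\!\top}Mw\}$, with value $+\infty$ off the range, as in Remark~\ref{rem:pinv}. This form is antitone in the Loewner order and handles the rank-deficient case without pseudoinverse pitfalls.
\end{itemize}
Applying the variational form gives $r_c(M_D)^2\le r_c(M(\xi)/r)^2=r\,r_c(M(\xi))^2$, and the infimum over $\xi$ gives the claim. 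I expect the main care to be here: making sure the support-function convention of Definition~\ref{def:rc}, rather than $c^{\!\top}M^\dagger c$, is what is monotone. The translation to a $\sqrt r$ loss in the certified-slack penalty $\eta\,r_c$ is then immediate.

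Tightness uses coordinate probes $Q_i=e_ie_i^{\!\top}$, $i\le r$. By symmetry the $D$-optimal design is uniform, with $M_D=I/r$. For the margin, $\sigma(e_1)^2=1$ while $e_1^{\!\top}M_De_1=1/r$, so equality holds. For the penalty, take $c=e_1$: then $r_c(M_D)^2=r$, whereas the point mass on probe $1$ gives $r_{e_1}^2=1$, since $e_1$ lies in its range. The ratio $r$ is therefore attained, and the factor $\sqrt r$ is sharp.
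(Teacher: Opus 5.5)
Your proof is correct and follows the paper's route. Both arguments use the Kiefer--Wolfowitz bound $\tr(M_D^{-1}Q(u))\le r$ and then $\lambda_{\max}\le\tr$ to get $Q(u)\preceq rM_D$; the margin bound comes from quadratic forms and $\sup_u$, the dual bound from integrating against $\xi$, and tightness from coordinate probes under the uniform design.

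The one difference is in the dual step. The paper inverts to $M_D^{-1}\preceq r\,M(\xi)^{-1}$ on the quotient, which tacitly assumes $M(\xi)$ is nonsingular there. Your variational form of $r_c$ is antitone on all $M\succeq0$, so it also covers singular designs, including the point-mass minimizer in your own tightness example.
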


\begin{proof}
First-order optimality of $\log\det$ gives $\tr(M(\xi_D)^{-1}Q(u))\le r$, hence $\lambda_{\max}(M^{-1/2}QM^{-1/2})\le r$. The primal statement follows by quadratic forms and $\sup_u$. For the dual, $Q(u)\preceq rM_D$ for all $u$ implies $M(\xi)\preceq rM_D$ for every $\xi$, hence $M_D^{-1}\preceq r\,M(\xi)^{-1}$ on the quotient; take quadratic forms at $c$ and the infimum over $\xi$. Tightness by the coordinate construction.
\end{proof}

\begin{corollary}\label{cor:price}
Certifying a contract whose witnesses span $r$ competing directions with the universal design requires $T\ge r(\kappa/\inf\sigma)^2$; the design-diversity factor $r$ is the price of not knowing the contract, and by tightness is a genuine worst case rather than a proof artifact.
\end{corollary}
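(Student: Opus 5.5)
The statement to prove is Corollary~\ref{cor:price}. The plan is to combine the exact certifiability threshold of Theorem~\ref{thm:fixed} and Corollary~\ref{cor:kappa} with the primal half of Theorem~\ref{thm:universal}.

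\emph{Sufficiency.} Fix a mode $k$. Under $\xi_D$ the margin satisfies
\[
m_{T,k}(\xi_D)^2=T\inf_{w\in\W_k}w^{\!\top}M(\xi_D)w\ \ge\ \frac{T}{r}\inf_{w\in\W_k}\sigma(w)^2,
\]
because Theorem~\ref{thm:universal} gives $w^{\!\top}M(\xi_D)w\ge\sigma(w)^2/r$ pointwise. Hence $m_{T,k}(\xi_D)\ge\kappa$ as soon as $T\ge r(\kappa/\inf_{\W_k}\sigma)^2$. Taking the worst mode gives the stated budget, with $\inf\sigma$ read as the infimum over all witness sets. By Theorem~\ref{thm:fixed} this budget certifies each mode at $\kappa=z_{1-\alpha}+z_{1-\beta}$. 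For a multi-mode contract, Proposition~\ref{prop:modes} applies with $\kappa$ replaced by $z_{1-\alpha}+z_{1-\beta/K}$; I will state the corollary per mode and note this replacement.

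\emph{Tightness.} This is the main step, since the claim that the factor $r$ is ``a genuine worst case'' needs the matching necessity direction. I will use the construction already named in Theorem~\ref{thm:universal}. Take coordinate probes $Q_i=e_ie_i^{\!\top}$ on the quotient, for which the $D$-optimal design is uniform with $M(\xi_D)=I/r$. Take a contract whose witness set includes $w=\rho e_1$, one of the $r$ competing coordinate directions. Then $\sigma(w)^2=\rho^2$, while $w^{\!\top}M(\xi_D)w=\rho^2/r$, so the primal inequality holds with equality. The necessity half of Theorem~\ref{thm:fixed} is Neyman--Pearson on the nearest pair, so certification under $\xi_D$ requires exactly $T\rho^2/r\ge\kappa^2$, that is, $T\ge r(\kappa/\rho)^2$.

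\emph{Why this is the price of not knowing the contract.} A design concentrated on probe $1$ would need only $T\ge(\kappa/\rho)^2$. This is also the best any design can do, by the policy-free bound of Theorem~\ref{thm:oracle}, since $w^{\!\top}M(\xi)w\le\sigma(w)^2$. The ratio between the two budgets is therefore exactly $r$. The only care needed is that the witness set be convex so that Theorem~\ref{thm:fixed} applies; a single-coordinate witness segment suffices. Symmetric contracts over all $r$ coordinates realize the same ratio for every direction.
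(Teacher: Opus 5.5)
Your proposal is correct and follows the route the paper intends: you read the corollary off the primal inequality $w^{\top}M(\xi_D)w\ge\sigma(w)^2/r$ of Theorem~\ref{thm:universal} combined with the threshold $m_{T,k}\ge\kappa$ of Theorem~\ref{thm:fixed}, and you get tightness from the coordinate-probe, uniform-design construction, just as the paper does. Your note that a $K$-mode contract needs $z_{1-\beta/K}$ in place of $z_{1-\beta}$ is a fair refinement, but the claim that $(\kappa/\rho)^2$ is the best any design can do should rest on $w^{\top}M(\xi)w\le\sigma(w)^2$ together with the necessity half of Theorem~\ref{thm:fixed}, so it holds for fixed design measures, whereas Theorem~\ref{thm:oracle} by itself only gives the weaker adaptive bound $T\ge 4(1-\alpha-\beta)^2/\rho^2$.
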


\section{From Information to Certified Slack}\label{sec:transfer}

\begin{theorem}[Transfer; exact for constraints affine over the confidence region]\label{thm:transfer}
Let $C=\{\hat x+Dw:Tw^{\!\top}Mw\le q\}$, intersected where needed with the design-time reachable set, and let $g_j(\hat x+Dw,a)=g_j(\hat x,a)+c_j(a)^{\!\top}w$ hold \emph{for all points of $C$}. Then $a$ is robustly feasible on face $j$ over $C$ iff
\[
g_j(\hat x,a)+\sqrt{q/T}\;r_{c_j(a)}(M)\le0 ,
\]
the left side being $+\infty$ exactly when $c_j\notin\Ra M$ and the null direction is not excluded by reachability. Moreover $M_1\succeq M_2\Rightarrow C(M_1)\subseteq C(M_2)\Rightarrow\Ac(M_1)\supseteq\Ac(M_2)$.
\end{theorem}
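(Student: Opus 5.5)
The plan is to reduce robust feasibility to a support-function computation. Under the affine hypothesis, $a$ is robustly feasible on face $j$ over $C$ iff
\[
\sup_{w:\,Tw^{\!\top}Mw\le q} g_j(\hat x,a)+c_j(a)^{\!\top}w\le0,
\]
that is, iff $g_j(\hat x,a)+h_{C_T}(c_j)\le0$, where $h_{C_T}(c):=\sup\{c^{\!\top}w:Tw^{\!\top}Mw\le q\}$. Everything therefore hinges on evaluating $h_{C_T}(c)=\sqrt{q/T}\,r_c(M)$. The paper already uses this identity in the proof of Theorem~\ref{thm:separation}; I will prove it here directly, with Remark~\ref{rem:pinv} as motivation.

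\textbf{Support function.} There are two cases.

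If $c\notin\Ra M$, I write $c=c_{\Ra}+c_{\ker}$ with $c_{\ker}\neq0$ and take $w=t\,c_{\ker}$. This point satisfies $w^{\!\top}Mw=0$, so it lies in $C_T$ for every $t$, while $c^{\!\top}w=t\|c_{\ker}\|^2\to\infty$. Hence $h_{C_T}(c)=+\infty=\sqrt{q/T}\,r_c(M)$.

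If $c\in\Ra M$, I write $c=Mz$. Then $c^{\!\top}w=z^{\!\top}Mw$, and Cauchy--Schwarz in the semi-inner product $\langle u,v\rangle_M=u^{\!\top}Mv$ gives
\[
c^{\!\top}w\le\sqrt{z^{\!\top}Mz}\,\sqrt{w^{\!\top}Mw}\le\sqrt{c^{\!\top}M^{\dagger}c}\,\sqrt{q/T}.
\]
Here $z^{\!\top}Mz=c^{\!\top}M^{\dagger}c$, which holds because $MM^{\dagger}M=M$ and the value does not depend on which preimage $z$ is chosen. The bound is attained at $w=\sqrt{q/T}\,M^{\dagger}c/\sqrt{c^{\!\top}M^{\dagger}c}$, which gives equality.

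\textbf{Reachability.} When $C$ is intersected with a design-time reachable set, the sup is taken over a smaller set. If reachability bounds the null direction, the $+\infty$ case no longer applies. I would state only the claimed direction: the left side is $+\infty$ exactly when $c_j\notin\Ra M$ \emph{and} the ray $t\,c_{\ker}$, or some ray with $c^{\!\top}w\to\infty$ inside $\ker M$ intersected with the reachable set, remains admissible. This is essentially a restatement of the unbounded case. The affine identity is assumed on all of $C$, so no linearization error enters.

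\textbf{Monotonicity.} I will show it in two steps.

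First, $M_1\succeq M_2$ gives $w^{\!\top}M_2w\le w^{\!\top}M_1w$, so $C(M_1)\subseteq C(M_2)$.

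Second, a sup over a subset is smaller, so $h_{C(M_1)}(c)\le h_{C(M_2)}(c)$ for every $c$. Therefore every face inequality that holds for $M_2$ holds for $M_1$, and $\Ac(M_1)\supseteq\Ac(M_2)$.

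The main obstacle is the pseudoinverse identity in the range case, specifically the equality $z^{\!\top}Mz=c^{\!\top}M^{\dagger}c$ for an arbitrary preimage $z$. I handle it via $z=M^{\dagger}c+k$ with $k\in\ker M$, which leaves $z^{\!\top}Mz$ unchanged.
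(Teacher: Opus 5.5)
Your proposal is correct and follows the paper's route. The paper's proof is exactly the ellipsoid support-function identity $\sup_{C_T}c^{\top}w=\sqrt{q/T}\,r_c(M)$, obtained by Cauchy--Schwarz in the $M$-semi-inner product when $c\in\Ra M$ and by the null ray when $c\notin\Ra M$ (as in the proof of Theorem~\ref{thm:separation}), followed by set inclusion for monotonicity; you supply the details the paper leaves implicit, and, like the paper, you treat the reachability clause only as a restatement, which is all the statement as written supports.
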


The proof is the ellipsoid support function, classical in the set-membership lineage \cite{bertsekas1971,kurzhanski1997}; the content is the accounting, and the scope---affine \emph{over the region}, not first-order affine at its center---is load-bearing.

\begin{example}[Null-space curvature]\label{ex:curvature}
$M=\operatorname{diag}(1,0)$, $g(w)=w_2^2-\varepsilon$: at $w=0$ the gradient is $c=0\in\Ra M$ and the first-order penalty vanishes, yet $C$ is unbounded in $w_2$ and $\sup_Cg=+\infty$. For nonlinear constraints, $c\in\Ra M$ does not imply a finite penalty; curvature re-enters through the null space, and the divergence criterion is claimed for affine constraints only.
\end{example}

\begin{corollary}[Curvature budget]\label{cor:curvature}
If $C$ is bounded (null space excluded by design-time reachability) and $\|M^{\dagger/2}\nabla^2_wg_j\,M^{\dagger/2}\|_{\mathrm{op}}\le L_j$ on $C$, then $g_j(\hat x,a)+\sqrt{q/T}\,r_{c_j}(M)+\tfrac{L_jq}{2T}\le0$ is sufficient for robust feasibility, and the same expression with $-\tfrac{L_jq}{2T}$ and strict positivity is sufficient for robust infeasibility. The linear penalty is $O(T^{-1/2})$ and the curvature budget $O(T^{-1})$, so the affine formula is recovered asymptotically with the sandwich quantifying the pre-asymptotic regime.
\end{corollary}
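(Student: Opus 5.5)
The plan is to control the curvature remainder by a second-order Taylor expansion along the segment from $\hat x$ to $\hat x+Dw$, and to bound it uniformly over $C$ in the $M$-geometry in which $C$ is a ball. Writing $h_j(w):=g_j(\hat x+Dw,a)$, Taylor's theorem with Lagrange (or integral) remainder gives, for each $w\in C$,
\[
h_j(w)=g_j(\hat x,a)+c_j^{\!\top}w+\tfrac12\,w^{\!\top}\nabla^2_wg_j(\tilde w)\,w
\]
for some $\tilde w$ on the segment $[0,w]$. The segment lies in $C$ because $C$ is an ellipsoid centred at $\hat x$, intersected with a reachable set that we take, as a design-time convention, to be convex and to contain $\hat x$ (star-shaped about $\hat x$ would suffice). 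Hence the Hessian hypothesis applies at $\tilde w$. The linear term is handled exactly as in Theorem~\ref{thm:transfer}. Its supremum over $C$ is at most $\sqrt{q/T}\,r_{c_j}(M)$, and its infimum is at least the negative of that, by the ellipsoid support function of Definition~\ref{def:rc}.

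For the quadratic term I would use the boundedness assumption. Because the null space is excluded, every $w\in C$ can be taken in $\Ra M$, or more precisely the reachable constraint removes the $\ker M$ component. Then $w=M^{\dagger/2}z$ with $z:=M^{1/2}w$ and $\|z\|^2=w^{\!\top}Mw\le q/T$. It follows that
\[
\bigl|w^{\!\top}\nabla^2g_j\,w\bigr|=\bigl|z^{\!\top}M^{\dagger/2}\nabla^2g_j M^{\dagger/2}z\bigr|\le L_j\|z\|^2\le L_jq/T .
\]
Combining the two bounds yields
\[
\sup_C h_j\le g_j+\sqrt{q/T}\,r_{c_j}+\tfrac{L_jq}{2T},
\qquad
\sup_C h_j\ge g_j+\sqrt{q/T}\,r_{c_j}-\tfrac{L_jq}{2T}.
\]
For the lower bound I evaluate at the linear maximizer $w^\ast=\sqrt{q/T}\,M^\dagger c_j/r_{c_j}$, which lies in $C$ when reachability does not cut it off. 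The first inequality gives robust feasibility whenever the right side is $\le0$. The second gives a point of $C$ with $h_j>0$ whenever the right side is $>0$, i.e.\ robust infeasibility. The rates then follow by inspection: the linear penalty scales as $T^{-1/2}$ and the curvature budget as $T^{-1}$.

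The main obstacle is the identification $w\in\Ra M$ on $C$, which I need for $w=M^{\dagger/2}M^{1/2}w$. When reachability only bounds $C$ without killing the $\ker M$ component, the pseudo-inverse bound fails, exactly as in Example~\ref{ex:curvature}. I would handle this by working on the recoverable quotient, so that $M\succ0$ there and $M^{\dagger/2}=M^{-1/2}$. Equivalently, I would state that reachability confines $C$ to $\hat x+D\,\Ra M$, and note that this is the sense of ``null space excluded''. A secondary point is that the infeasibility half needs the maximizer $w^\ast$ to be reachable. If reachability truncates it, that half should be stated for the untruncated ellipsoid, or with $\sup$ over the truncated set in place of $\sqrt{q/T}\,r_{c_j}$.
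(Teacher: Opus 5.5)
Your proof is correct and follows the same approach as the paper. The paper states this corollary without proof, leaning on the ellipsoid support function of Theorem~\ref{thm:transfer}, and your second-order Taylor remainder bounded in the $M$-whitened metric, with the linear maximizer giving the lower side of the sandwich, is the natural way to supply the missing step. Your insistence that ``null space excluded'' mean confinement of $C$ to $\hat x+D\,\Ra M$, not mere boundedness, is genuinely needed: with $M=\operatorname{diag}(1,0)$, $g=w_2^2-\varepsilon$ ($0<\varepsilon<1$) and reachable set $\{|w_2|\le1\}$, the region $C$ is bounded and $L_j=0$, $r_c(M)=0$, so the stated feasibility condition holds although $\sup_C g=1-\varepsilon>0$.
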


\section{Saturation Classes of the Certifiable Dimension}\label{sec:saturation}

With $N(T,\xi):=\#\{i:T\lambda_i(M(\xi))\ge\kappa^2\}$ an integer staircase, slope statements refer to the \emph{fitted slope} over the window $1<N<r$.

\begin{proposition}[Three response classes]\label{prop:sat}
(i) \emph{finite cap}: norm convergence $M_m\to M_\infty$ caps $N$ independently of $m$; (ii) \emph{zero cap}: annihilation of the decision subspace gives $N\equiv0$; (iii) \emph{unsaturated within budget}: channels not asymptotically dominated grow $N$ until a structural bound---for a steady-state diffusive network with $W$ ports, reciprocity bounds the accumulated information rank, hence $N^{*}\le W(W-1)$.
\end{proposition}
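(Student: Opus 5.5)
The plan is to treat the three classes separately, each through the monotone map $\lambda\mapsto \mathbf 1\{T\lambda\ge\kappa^2\}$. Everything rests on one elementary fact: $N(T,\xi)$ counts eigenvalues of $M(\xi)$ above the moving threshold $\kappa^2/T$. Consequently $N$ is controlled entirely by the spectrum of the information matrix, and caps follow from uniform spectral bounds.

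\textbf{Class (i).} Index the probe family by $m$ with information $M_m\to M_\infty$ in operator norm. By Weyl's inequality, $|\lambda_i(M_m)-\lambda_i(M_\infty)|\le\|M_m-M_\infty\|_{\mathrm{op}}\le\epsilon$ for all $m\ge m_0$. Fix the budget $T$ under consideration. Then
\[
N(T,M_m)\le\#\{i:\lambda_i(M_\infty)\ge\kappa^2/T-\epsilon\},
\]
and the right side does not depend on $m$. If $\lambda_i(M_\infty)=0$ beyond some index $k_\infty=\operatorname{rank}M_\infty$, take $\epsilon<\kappa^2/(2T)$. This gives the cap $N\le k_\infty$ uniformly in $m$ for $m\ge m_0$. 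The finitely many earlier $m$ are absorbed into the cap by taking a maximum. The point is that adding probes indexed by $m$ cannot manufacture new directions once the norm limit has frozen the spectrum.

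\textbf{Class (ii).} If every $Q(u)$ annihilates the decision subspace, i.e.\ $S(u)Dv=0$ for all $v$ in that subspace, then $M(\xi)$ restricted to the quotient is $0$ for every $\xi$. Hence all $\lambda_i=0$ and $N\equiv0$ at every $T$. Theorem~\ref{thm:oracle} confirms this policy-free: $\sigma(w)=0$, so no budget helps.

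\textbf{Class (iii).} Here I would use the reciprocity structure of a steady-state diffusive network. Each measurement between ports $(i,j)$ is a bilinear functional of a Green's/transfer matrix $G\in\R^{W\times W}$, so every row of $S(u)$ is a linear image of an entry $G_{ij}$ of the port response map. The accumulated information therefore factors through the linear map $x\mapsto\delta G(x)$ restricted to measured entries. Its rank is at most the number of independent entries accessible to port measurements. I would take this count as $W(W-1)$ off-diagonal transfer entries, since diagonal self-responses are excluded by the measurement protocol. Then $\operatorname{rank}M(\xi)\le W(W-1)$ for every $\xi$, and $N\le\operatorname{rank}M$ gives $N^*\le W(W-1)$. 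Below that bound, channels not asymptotically dominated keep contributing nonzero eigenvalues, which cross $\kappa^2/T$ one at a time as $T$ grows. This yields the staircase growth.

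The main obstacle is the structural count in (iii). Reciprocity $G=G^{\top}$ would naively suggest $W(W-1)/2$ independent off-diagonal entries, not $W(W-1)$. So I must pin down what ``accumulated information rank'' counts. My first attempt would argue that linearization in $x$ yields $\delta G$ with independent forward and adjoint sensitivities, which are not symmetric in their dependence on the state. This makes the bound $W(W-1)$ an upper bound that is stated loosely but remains valid. If that fails, I would simply state the bound as the dimension of the span of measured entries, which is at most $W(W-1)$ trivially.
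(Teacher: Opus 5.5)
The paper states Proposition~\ref{prop:sat} without proof, so your argument is judged on its own terms. Parts (i) and (ii) are sound.

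In (i), your Weyl bound is correct for each fixed $T$. Letting $\epsilon\downarrow0$ gives the sharper $\limsup_m N(T,M_m)\le N(T,M_\infty)$. Since $N\le r$ holds trivially in finite dimension, the real content is this identification of the cap with the limiting spectrum, rather than $N\le\operatorname{rank}M_\infty$ after a finite maximum. For Loewner-increasing accumulation $M_m\preceq M_\infty$ you even get $N(T,M_m)\le N(T,M_\infty)$ for all $m$ and $T$.

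In (ii), the condition should read $S(u)v=0$ (equivalently $J(u)Dv=0$), because $S(u)$ already contains $D$.

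The gap is in (iii), at exactly the step the statement attributes to reciprocity. Your count never uses reciprocity. It rests on an assumption absent from the statement, namely that self-responses $G_{ii}$ are never measured, and on treating each port entry as a single real channel. Your attempted reconciliation is false. If $G(x)=G(x)^{\top}$ for every state, then $\mathrm dG[h]=\mathrm dG[h]^{\top}$ for every perturbation $h$, so linearization cannot produce independent forward and adjoint sensitivities. In a conductivity network, for instance, $\partial G_{ij}\propto\int\delta\sigma\,\nabla u_i\cdot\nabla u_j$, which is visibly symmetric in $(i,j)$. Your fallback, ``at most $W(W-1)$ trivially'', is only as strong as the imported protocol assumption. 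With diagonal entries measured and no reciprocity, the same count gives $W^2$. With time-harmonic (complex) data it gives $2W(W-1)$ real dimensions. So the bound can fail precisely where reciprocity is what rescues it.

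The missing argument is a factorization plus a dimension count. Write each probe as $J(u)=L_u\circ\mathrm dG$, with $L_u$ linear on port-response matrices and adjoints taken in the Frobenius inner product. Then
\[
M(\xi)=D^{\top}\,\mathrm dG^{*}\Bigl(\int L_u^{*}\Sigma^{-1}L_u\,\mathrm d\xi(u)\Bigr)\mathrm dG\,D ,
\]
so $\operatorname{rank}M(\xi)\le\dim\Ra(\mathrm dG)$ for every design and every number of accumulated excitation rounds. This common factorization, not a tally of measured entries, is why ``rounds beyond it yield nothing''.

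The dimension count then goes as follows:
\begin{itemize}
\item Reciprocity places $\Ra(\mathrm dG)$ among the symmetric $W\times W$ matrices, of dimension $W(W+1)/2\le W(W-1)$ for $W\ge3$.
\item With flux conservation ($G\mathbf 1=0$), it lies among symmetric operators on $\mathbf 1^{\perp}$, of dimension $W(W-1)/2$.
\item So $W(W-1)$ is a bound with slack for real data, and it is the exact real count when each reciprocal entry carries two real channels.
\end{itemize}
Finally, $N(T,\xi)\uparrow\operatorname{rank}M(\xi)$ as $T\to\infty$ gives the staircase growth. The eigenvalues need not cross $\kappa^2/T$ one at a time: with two directions per mode, as in Corollary~\ref{cor:slope}, they cross in pairs.
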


\begin{corollary}[Fitted slope]\label{cor:slope}
For a geometrically decaying whitened spectrum $\lambda_n\propto\rho^{2n}$ with two directions per mode, the fitted slope of $N$ against $\ln D$ is $2/\ln(1/\rho)$.
\end{corollary}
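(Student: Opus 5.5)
We treat $N(T,\xi)$ as a function of the whitened spectrum only. In the setting of Proposition~\ref{prop:sat} the decision coordinates are indexed by a growing parameter $D$ (ports, modes or resolution). For each mode we take the spectrum to contribute two eigenvalues of the form $\lambda_n=C\rho^{2n}$, and the count is
\[
N=\#\{n:\ TC\rho^{2n}\ge\kappa^2\}.
\]

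The first step is to solve the threshold inequality. Taking logarithms, $TC\rho^{2n}\ge\kappa^2$ is equivalent to
\[
n\le n^\ast:=\frac{\ln(TC/\kappa^2)}{2\ln(1/\rho)}.
\]
Since two directions are counted per mode, $N=2\lfloor n^\ast\rfloor$, up to a boundary correction of at most $2$. The second step is to identify the abscissa. The statement plots $N$ against $\ln D$, so we must fix how the budget or information scale enters through $D$. We would adopt the convention implicit in the saturation section: the effective information scale $TC$ grows linearly in $D$, for instance because the budget or the number of contributing channels scales with $D$. Then $\ln(TC/\kappa^2)=\ln D+\text{const}$. This gives
\[
N(D)=\frac{2}{2\ln(1/\rho)}\cdot 2\ln D\cdot\tfrac12+O(1).
\]
The factor bookkeeping here needs care. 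Two directions per mode, divided by the $2\ln(1/\rho)$ coming from $\rho^{2n}$, gives a coefficient $1/\ln(1/\rho)$ on $\ln(TC)$. The stated $2/\ln(1/\rho)$ then requires the scale to enter quadratically, $TC\propto D^2$. That is the natural reading if $D$ is a length scale and the information is a squared whitened amplitude: margins scale with $D$, so the information $w^{\top}Mw$ scales with $D^2$, which is exactly the scaling non-identifiability noted in the introduction. We would therefore state explicitly that $\lambda_n(D)\propto D^2\rho^{2n}$, so that $N=\frac{2}{\ln(1/\rho)}\ln D+O(1)$.

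The third step passes from the staircase to the fitted slope. The sequence $N(D)$ is an integer staircase lying within a bounded distance (at most $2$) of the line $\frac{2}{\ln(1/\rho)}\ln D+b$. Over the window $1<N<r$, the least-squares slope of a function that differs from a line by a bounded periodic error converges to the line's slope as the window length grows. The error from this discreteness is $O(1/\text{window length})$, and it vanishes exactly if the fit is taken over whole periods of the staircase. This is the sense of ``fitted slope'' fixed before Proposition~\ref{prop:sat}.

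The main obstacle is the second step: pinning down how $D$ enters the eigenvalues, since the factor of $2$ in the stated slope depends entirely on the quadratic $D$-scaling of $M$. We would first verify this scaling against Definition~\ref{def:margin}, where $m_{T,k}$ is linear in $D^{-1}$-normalized witnesses, before committing to the constant.
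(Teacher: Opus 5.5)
Your counting argument is the intended one, and the quadratic dependence you flag as the main obstacle is forced rather than adopted: $D$ is the decision metric of Section~\ref{sec:setting}, and $S(u)=\Sigma^{-1/2}J(u)D$ makes $M(\xi)$, hence every $\lambda_n$, scale by $d^2$ under $D\mapsto dD$, so each cosine/sine pair crosses $\kappa^2/T$ once per increment $\ln(1/\rho)$ of $\ln D$ (checking this through Definition~\ref{def:margin} would only muddy it, since $\W_k$ carries a compensating $D^{-1}$). One correction to your last step: the least-squares discreteness error does not vanish over whole periods in general---a window of $K$ whole periods beginning at a jump gives fitted slope $\tfrac{2}{\ln(1/\rho)}(1-K^{-2})$---so on the finite window $1<N<r$ the statement describes the slope of the staircase's linear envelope, which the fit only approximates, consistent with the paper's own $2.19$ against $2.29$.
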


Numerically (flat 32-point scan): $\rho=0.4175$ gives $2.19$ vs.\ prediction $2.29$; $0.55$: $3.29$ vs.\ $3.35$; $0.70$: $5.42$ vs.\ $5.61$. Libraries amplifying high modes raise the slope (mixed library at $\rho=0.4175$: $2.99$), so the prediction fixes a value and a deviation direction. This is a physical corollary and a falsifiable prediction, not a central novelty: mode counting against a noise threshold is close to classical resolution analysis \cite{isaacson1991}.

\section{Two Audits: Margins and Certified Slack}\label{sec:audit}

Theorems~\ref{thm:separation} and~\ref{thm:faceequiv} predict that designing for testability and designing for admissibility are different problems. This section audits both on a semi-physical instance and reports where they agree, where they part, and by how much.

\paragraph{Instance.} The channel is a thirteen-mode interface model---a uniform mode and cosine/sine pairs for harmonics $1$--$6$, transmitted with geometric modal attenuation $\rho^{n}$---observed through an excitation library of angular scans, curvature and slope sensing, a low-noise mean-gap probe, a restricted-sector scan and a band-pass probe, each deployment costing one unit of budget. The action layer comprises six operating decisions (cold-start ramp, hot restart, overspeed, single-plane balance, extended service, acceptance without re-shim), each carrying affine constraint faces drawn from seven contract functionals: clearance evaluated at critical angular stations, total indicated runout, first-harmonic asymmetry, mean gap, an angular-gradient seizure proxy, and a high-harmonic waviness index. \emph{This instance is constructed for the present study; face normals and nominal slacks are illustrative and are not taken from any qualification record.} Confidence sets are at $\gamma=0.05$, so $q=\chi^2_{0.95,13}=22.36$, and $\kappa=3.29$ throughout.

\subsection{Margin audit}

With witnesses drawn from contract normals (never the coordinate basis, which would reproduce the tightness construction of Theorem~\ref{thm:universal} by fiat), the realized ratio $\lorc/\lD$ lies between $1.00$ and $6.50$ against the worst-case bound $r=13$, and equals $1.15$ for the angular-scan library closest to conventional instrumentation---the universal design within $7\%$ of the directionwise oracle in margin. The value of knowing the contract, $\lfix/\lD$, reaches $5.28$ only for narrowband lock-in probes and collapses toward $1$ for broadband scans. The headroom $\lorc/\lfix$ above the best fixed design is at most $1.23$ in every contract-normal configuration, while a positive control with per-band witnesses and specialized probes raises it to $3.16$, confirming that the procedure detects large headroom when it exists. The equivalence identity $\max_u\tr(M(\xi_D)^{\dagger}Q(u))=r$ holds to six decimals throughout. Under a pre-registered rule ($<1.5$ demote, $\ge2.0$ promote) this demotes the adaptive-design question \emph{for margin objectives}.

\subsection{Certified-slack audit}

We now design for $\varsigma$ rather than for margins. Three findings.

\emph{(a) A testability-optimal design can be admission-infeasible on a mixed normal.} Let the contract be tested through runout, first-harmonic asymmetry, waviness and the gradient proxy, while the action to be certified is a hot restart whose binding face is clearance at a critical station---a dense functional with a mean-gap component. The margin-maximin design places its entire mass on slope sensing, which annihilates the uniform mode: the resulting $M$ has rank $12$, and the clearance normal decomposes with $\|c_{\Ra}\|=0.926$ and $\|c_{\ker}\|=0.378$. This is exactly the third regime of Theorem~\ref{thm:separation}. Testing that face costs $T_{\mathrm{test}}=8.3$ deployments; certifying an action on it is impossible at every budget, $r_c(M)=+\infty$. The permissiveness design moves $0.23\%$ of the mass onto a full angular scan, restoring rank $13$ and certifying the same face at $T_{\mathrm{cert}}=127$. The failure is a feasibility break, not a degradation, and it is repaired by a change in the design that is invisible to the margin objective.

\emph{(b) The break is common, but its dependence on contract size is a property of the library, not a law.} The seven contract functionals generate $127$ non-empty face subsets, which we enumerate exhaustively rather than sample. For a subset $F$ let $\xi^\star_{\mathrm{margin}}(F)$ be the margin-maximin design over the witnesses $\{c_\ell\}_{\ell\in F}$, $M^\star_F$ its information matrix, and
\[
\delta_{\mathrm{null}}(F):=\max_{\ell\in F}\frac{\|P_{\ker M^\star_F}c_\ell\|}{\|c_\ell\|},
\]
so that $\delta_{\mathrm{null}}(F)=0$ iff the margin-optimal design is admission-feasible for $F$. We also report the practically meaningful quantity, the budget ratio, which by Theorem~\ref{thm:transfer} and Remark~\ref{rem:kantorovich} is proportional to the decoupling ratio:
\[
\frac{T_{\mathrm{cert}}}{T_{\mathrm{test}}}=\frac{q}{\kappa^2s^2}\,\chi(c,M),
\]
here at nominal slack $s=0.8$.

On the original library the exact-failure rate $p_{\mathrm{fail}}(n_f)$ over all subsets is $(0.00,0.19,0.34,0.37,0.29,0.14,0.00)$ for $n_f=1,\dots,7$: an interior peak at $n_f=4$. Three observations forbid reading this shape as general.

First, the zero at $n_f=1$ is not implied by the theory---and cannot be, since Theorem~\ref{thm:separation} states precisely that $c^{\!\top}Mc>0$ does not entail $c\in\Ra M$. It is an alignment property of this library: applying a common random orthogonal rotation to all probes, which leaves every $Q(u)$'s rank and spectrum unchanged and alters only the incidence between probe subspaces and contract normals, raises $p_{\mathrm{fail}}(1)$ from $0.00$ to $0.43$, $0.86$ and $0.57$ under three seeds. Second, the peak location is not stable: across ten library perturbations it sits at $n_f=1$, $2$, $3$ or $4$, or the profile is monotone increasing with no interior peak at all (Table~\ref{tab:c2} and Fig.~\ref{fig:robust}). Third, the argument that many faces force the design to span is false in general: a rank-one $M=vv^{\!\top}$ gives $c_\ell^{\!\top}Mc_\ell>0$ for arbitrarily many normals provided $v^{\!\top}c_\ell\neq0$.

Exact-rank failure is moreover a fragile indicator. Adding $0.1\%$ of a full angular scan to every probe drives $p_{\mathrm{fail}}$ to zero at every $n_f$---while leaving the fraction of subsets with $T_{\mathrm{cert}}/T_{\mathrm{test}}>10$ unchanged at $21/35$ for $n_f=3$. Removing the slope probe likewise eliminates exact failure entirely and simultaneously \emph{raises} near-inestimability, to $31/35$ at $n_f=3$ and $21/21$ at $n_f=5$: the binary and the practical indicators can move in opposite directions. The certifiable conclusion is therefore about the mechanism and the budget ratio, not about rank or about contract size: \emph{a design satisfying every margin requirement leaves at least one face at a certification cost above ten times its testing cost in $60\%$ to $100\%$ of three-face contracts in every library we tested}, and what governs this is the incidence between the measured subspace and the contract normals, for which $n_f$ is only a coarse proxy.

\begin{table}[t]
\centering\small
\caption{Exact-failure rate $p_{\mathrm{fail}}(n_f)$ over all $127$ face subsets, across library perturbations. The peak location is library dependent and the profile need not be interior-peaked; the rotated control preserves every probe's rank and spectrum and changes only the alignment with the contract normals.}
\label{tab:c2}
\vspace{0.4em}
\begin{tabular}{lcccccccc}
\toprule
library & $n_f{=}1$ & $2$ & $3$ & $4$ & $5$ & $6$ & $7$ & peak\\
\midrule
mixed (original) & $0.00$ & $0.19$ & $0.34$ & $0.37$ & $0.29$ & $0.14$ & $0.00$ & $4$\\
full-rank enriched ($+0.1\%$ scan) & $0.00$ & $0.00$ & $0.00$ & $0.00$ & $0.00$ & $0.00$ & $0.00$ & ---\\
leave out slope sensing & $0.00$ & $0.00$ & $0.00$ & $0.00$ & $0.00$ & $0.00$ & $0.00$ & ---\\
leave out curvature & $0.00$ & $0.19$ & $0.34$ & $0.37$ & $0.29$ & $0.14$ & $0.00$ & $4$\\
leave out full scan & $0.43$ & $0.33$ & $0.37$ & $0.37$ & $0.29$ & $0.14$ & $0.00$ & $1$\\
leave out mean-gap probe & $0.00$ & $0.19$ & $0.34$ & $0.37$ & $0.29$ & $0.14$ & $0.00$ & $4$\\
leave out band-pass & $0.00$ & $0.19$ & $0.34$ & $0.37$ & $0.29$ & $0.14$ & $0.00$ & $4$\\
narrowband-heavy & $0.71$ & $0.95$ & $1.00$ & $1.00$ & $1.00$ & $1.00$ & $1.00$ & $3$\\
rotated control, seed 1 & $0.43$ & $0.48$ & $0.40$ & $0.31$ & $0.24$ & $0.14$ & $0.00$ & $2$\\
rotated control, seed 2 & $0.86$ & $0.95$ & $1.00$ & $1.00$ & $1.00$ & $1.00$ & $1.00$ & $3$\\
rotated control, seed 3 & $0.57$ & $0.48$ & $0.46$ & $0.40$ & $0.29$ & $0.14$ & $0.00$ & $1$\\
\bottomrule
\end{tabular}
\end{table}

\begin{figure}[t]
\centering
\includegraphics[width=0.62\textwidth]{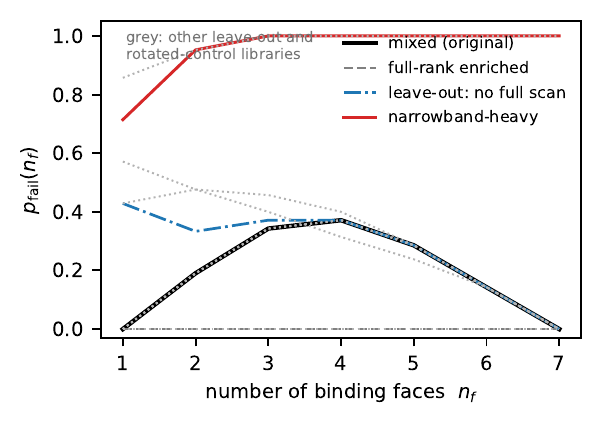}
\caption{Exact-failure rate against contract size across library perturbations. The interior peak of the original library (black) is not reproduced by the narrowband-heavy library (monotone to one) or by the library without a full angular scan, and vanishes entirely once every probe carries $0.1\%$ of a full scan---although near-inestimability does not vanish with it.}
\label{fig:robust}
\end{figure}

\emph{(c) The two budgets diverge as a normal rotates into the weakly measured subspace.} Rotating a face normal from first-harmonic asymmetry toward the high-harmonic waviness index under a displacement-only library at $\rho=0.4175$, the budget to test the face and the budget to certify an action on it separate by three orders of magnitude at intermediate angles (Fig.~\ref{fig:diverge}): at $45^\circ$, testing costs $7.7$ deployments and certifying $20\,428$, and the decoupling ratio $\chi$ of Remark~\ref{rem:kantorovich} peaks at $625$. The peak is at intermediate angles rather than at the extreme: a normal lying wholly in the weak subspace is hard to test as well, so both budgets grow together, while a mixed normal is cheap to test and ruinous to certify. This is the finite-rank shadow of the third regime of Theorem~\ref{thm:separation}, and it is where a margin audit and a permissiveness audit will disagree in practice.

\begin{figure}[t]
\centering
\includegraphics[width=0.62\textwidth]{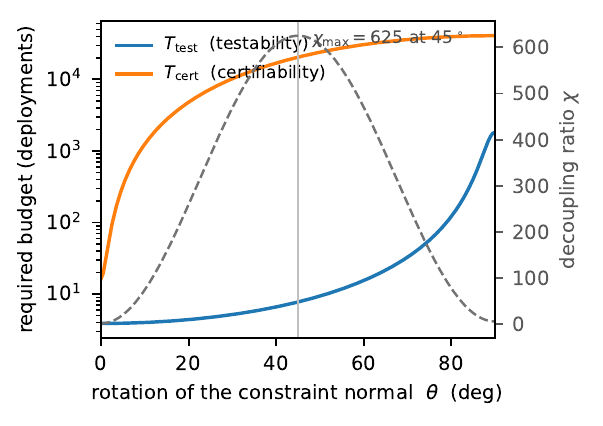}
\caption{Budget to test a face against budget to certify an action on it, as the constraint normal rotates from a well-transmitted mode ($\theta=0$) into the attenuated subspace ($\theta=90^\circ$); slack $0.70$ decision units, $D$-optimal design, displacement-only library. The decoupling ratio $\chi$ (dashed, right axis) peaks at intermediate angles rather than at the extreme: a normal lying wholly in the weak subspace is also hard to test, so both budgets grow together, whereas a mixed normal is cheap to test and ruinous to certify.}
\label{fig:diverge}
\end{figure}

\subsection{Verification of the equivalence conditions}

At the permissiveness optimum of the six-action instance ($T=400$, $\eta=\sqrt{q/T}$), the program attains $\tau^\star=0.3206$ with two active faces---extended service on waviness ($s=0.70$, $r^\star=1.605$) and acceptance without re-shim on clearance at a bolt station ($s=0.60$, $r^\star=1.182$)---and two support probes. Recovering the multipliers gives $\mu=(0.486,0.514)$; the sensitivity $\tr[(M^\star)^{-1}A^\star(M^\star)^{-1}Q(u)]$ equals the threshold $\tr[(M^\star)^{-1}A^\star]=1.3873$ on both support probes to $10^{-8}$ and is strictly below it elsewhere, and the balance-law residual $r^\star_\ell-(s_\ell-\tau^\star)/\eta$ vanishes to machine precision. Fig.~\ref{fig:equiv} shows the sensitivity function against the threshold across the excitation library. The bottleneck certificate names two faces against the Carath\'eodory bound of $r(r+1)/2+1=92$: optimality of the design is witnessed by two constraints out of fifteen action--face pairs. An independent three-dimensional instance with five probes and four faces reproduces the same checks ($\tau^\star=0.1422168$, two active faces, three support probes, $\mu=(0.345,0.655)$, threshold $1.5000882$ matched to $10^{-8}$ on the support and $1.462$, $1.088$ off it).

\begin{figure}[t]
\centering
\includegraphics[width=0.62\textwidth]{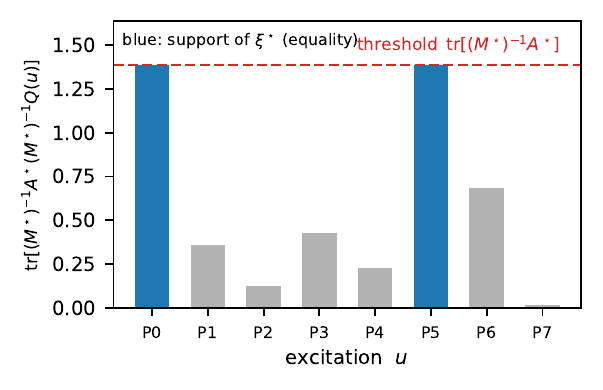}
\caption{Verification of the equivalence condition \eqref{eq:equiv} at the permissiveness optimum: the generalized sensitivity meets the threshold exactly on the support of $\xi^\star$ (to $10^{-8}$) and lies strictly below it on every other excitation.}
\label{fig:equiv}
\end{figure}

\subsection{What the two audits together say}

The margin audit says that for broadband instrumentation a contract-agnostic design is nearly optimal and adaptivity has little to add. The certified-slack audit says that the same conclusion does not transfer to admission: a design that satisfies every margin requirement can leave an action functional inestimable outright, and---in a form that survives every library perturbation we tried, including one that repairs the rank deficiency exactly---can leave it at a certification cost above ten times its testing cost. Whether the failure is exact or merely severe, and how it varies with the number of binding faces, are properties of the incidence between the measured subspace and the contract normals; the mechanism is not. The audits agree when all contract normals sit comfortably inside the measured span and disagree exactly as a normal approaches its boundary---which is what Theorem~\ref{thm:separation} predicts, and the reason a permissiveness audit is not an optional refinement of a margin audit.

\section{Discussion}

\emph{What changes with the object.} The classical design literatures optimize what can be known; this paper optimizes what can be \emph{done}. Theorem~\ref{thm:separation} shows the two can break apart to the point of admission infeasibility while discrimination is perfect, and Remark~\ref{rem:kantorovich} bounds the decoupling at full rank by the condition number. Theorem~\ref{thm:faceequiv} shows that once several action faces compete, the optimal design solves an $L$-optimal problem whose target it generates from its own bottlenecks, with a slack-inverse balance law and a bottleneck certificate of bounded size. Where the object does \emph{not} change the mathematics---single-face design, which is $c$-optimal and goal-oriented design exactly---we say so.

\emph{What the audit adds.} The two audits agree wherever contract normals sit inside the measured span and part exactly at its boundary: a margin-optimal design can give every witness a strictly positive margin and still leave a face non-estimable, at a cost that is a feasibility break rather than a degradation. Across every library we tested the break is common at the level that matters operationally---a certification cost above ten times the testing cost for at least one face in $60\%$ to $100\%$ of three-face contracts---while its dependence on contract size, and even the binary rank criterion itself, are properties of the incidence between probe subspaces and contract normals rather than laws.

\emph{Predictions.} (P1) In a steady-state diffusive network with $W$ ports the accumulated information rank, hence $N^{*}$, is bounded by $W(W-1)$, and excitation rounds beyond it yield nothing; whether the bound is attained is empirical and we do not prejudge it. (P2) The certifiable dimension saturates strictly below the structural bound whenever the spectrum decays. (P3) The fitted slope is $2/\ln(1/\rho)$, with upward deviation for high-mode-amplifying libraries. (P4) The set of directions failing the margin test is not predictable from geometric proximity to the nearest sensing port (correlation weaker than $|r|=0.4$)---the most diagnostic of the four, since its failure would collapse the machinery to a proximity map.

\emph{Limitations.} Local and linear-Gaussian throughout; the mode-level equivalence needs convex whitened mode images and the contract-level statement is a bracket; Theorem~\ref{thm:faceequiv} assumes a finite face set and an attained optimum with $M^\star$ nonsingular on the quotient, and does not address how the active set changes with budget; Theorem~\ref{thm:transfer} is exact only for constraints affine over the confidence region; the audit is on a constructed semi-physical instance rather than a qualified one, its action layer is illustrative, and its contract-size profiles are library dependent by construction; the adaptive design problem is bounded, measured, and left open.

\end{document}